%% file: main.tex
\documentclass{article} 
\usepackage{iclr2027_conference,times}
\usepackage{graphicx}
\usepackage{subcaption}
\usepackage{ifthen}
\usepackage{booktabs}
\usepackage{algorithm}
\usepackage{algpseudocode}
\usepackage{placeins}

\input{math_commands.tex}

\usepackage{amsthm}
\usepackage{amssymb}
\input{header.tex}

\input{cmd.tex}
\usepackage{hyperref}
\hypersetup{
    hidelinks,
    pdftitle={Dataset Pruning from First Principles: A Label-Free Linear Programming Approach},
    pdfauthor={Rodrigo Schuller and Francisco Ganacim},
    pdfsubject={},
    pdfkeywords={}
}
\usepackage{url}

\usepackage{microtype}          
\iclrfinalcopy
\showcommentsfalse
\markeditsfalse

\usepackage{etoolbox}
\makeatletter
\patchcmd{\@maketitle}
  {\lhead{Published as a conference paper at ICLR 2027}}
  {\lhead{}}
  {}{\PackageError{arxiv}{Could not remove the ICLR publication header}{Check the conference style.}}
\makeatother

\title{Dataset Pruning from First Principles: \\ A Label-Free Linear Programming Approach}

\author{Rodrigo Schuller \& Francisco Ganacim \\
Instituto de Matem\'atica Pura e Aplicada (IMPA) \\
Rio de Janeiro, Brazil \\
\texttt{\{rodrigo.loro,ganacim\}@impa.br}
}

\begin{document}

\maketitle

\begin{abstract}
Dataset pruning reduces a large training set to a representative subset while preserving model performance. Existing geometry-based methods typically assume that nearby points in embedding space share similar properties. Rather than imposing this assumption, we derive geometric selection criteria by reformulating unbiased subset selection as a variance minimization problem. Unbiasedness ensures that unweighted subset averages recover full-dataset averages in expectation, including losses and gradients at fixed model parameters. Specifically, we characterize a family of unbiased subset selection algorithms as a high-dimensional polytope. In this context, minimizing the expected sampling variance is a linear objective. Differences in sampling variance, averaged over rigid motions, admit closed-form pairwise expressions. Because the polytope has high dimension, directly applying standard linear programming is impractical. We instead use these expressions to construct an efficient vertex walk that optimizes an approximation of the variance objective while preserving unbiasedness, yielding a method that requires neither labels nor model training during selection. Across CIFAR-10, MNIST, and CelebA benchmarks, our method matches or exceeds uniform sampling in mean test accuracy at every evaluated budget and outperforms competing geometric methods in several settings, particularly at small selection budgets. Beyond dataset pruning, the same framework reduces stochastic-gradient variance by increasing diversity within mini-batches while keeping the batch size unchanged.
\end{abstract}

\section{Introduction}
Training large deep learning models on abundant data raises a basic question: given a fixed budget, which samples should we train on? The budget may be compute, as in self-supervised pretraining, where scaling laws favor larger models and datasets \citep{kaplan2020,hoffmann2022} at substantial computational cost \citep{strubell2019}, and where a well-chosen subset can outperform a larger random one \citep{sorscher2022}. It may also be annotation, as in remote sensing, where unlabeled archives are abundant but expert labels are expensive \citep{tuia2011}. Dataset pruning, also known as coreset selection, chooses a small, representative subset of the data. Addressing both cases requires selection that is label-free, so that annotation can be deferred until after selection, and training-free, so that selection does not itself consume the compute it is meant to save. We study methods that satisfy both requirements.

Without labels or training signals, selection must rely on the geometry of the embedding space alone. Existing geometry-based methods typically assume that nearby points in embedding space share similar properties \citep{moser2026}.
This assumption is a heuristic rather than a derived principle: it treats nearby points as redundant, which biases the distribution of the selected subset and can push accuracy below uniform random sampling \citep{guo2022}.
We drop this assumption and instead derive a geometric selection criterion from the \emph{principle of no preferred orientation}.

We seek a subset whose unweighted averages are unbiased, minimum-variance estimates of full-dataset averages of an unknown downstream function, such as a loss or gradient.  Since we know nothing about how downstream functions are aligned with the data, we assume no preferred orientation. Under this assumption, we average the mean squared estimation error over all rigid motions of the function, or equivalently of the data (Figure~\ref{fig:overview}, left). For unbiased selections, this averaged error is exactly the expected sampling variance.  On toroidal and spherical embedding spaces, this averaging yields closed-form expressions previously studied in the context of blue-noise sampling for computer graphics \citep{pilleboue2015}. We then use these formulas to build a novel unbiased coreset selection method.

These two geometries cover most embedding spaces used in practice. Many encoders are trained with cosine similarity and output $\ell_2$-normalized features, so their embeddings lie on the unit sphere; this is the case for contrastive image encoders \citep{chen2020,wang2020}, sentence embeddings for language \citep{reimers2019}, and multimodal models such as CLIP \citep{radford2021}. Any other embedding distribution with bounded support, or with rapidly decaying tails such as the approximately Gaussian latent codes of a variational autoencoder \citep{kingma2014}, can be enclosed in a sufficiently large box. Identifying opposite faces of this box places the data on a flat torus while leaving the pairwise distances between data points unchanged.

\newsavebox{\figonebox}
\begin{figure*}[t]
\centering
\sbox{\figonebox}{\includegraphics[width=0.776\textwidth]{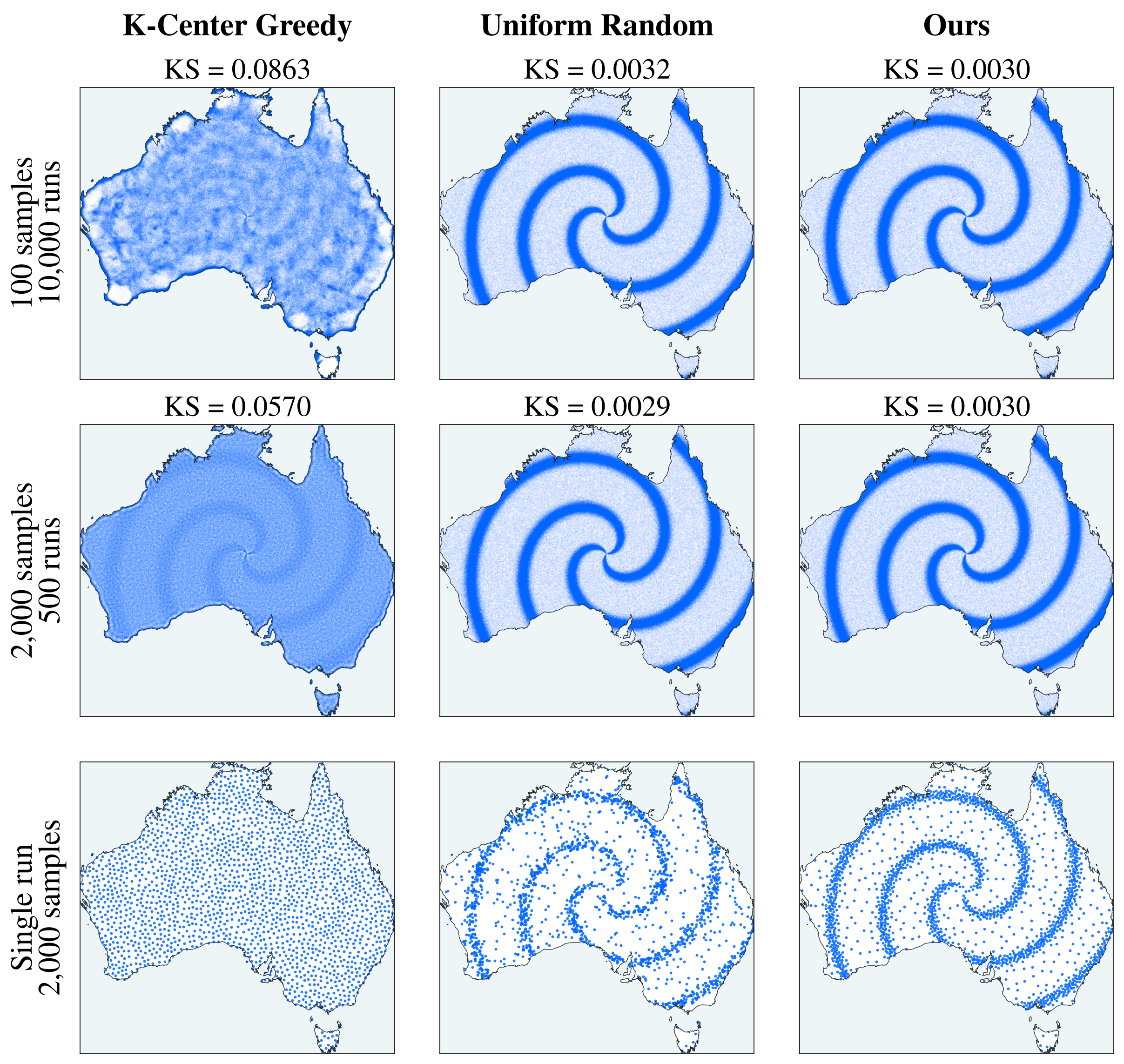}}
\includegraphics[height=\ht\figonebox]{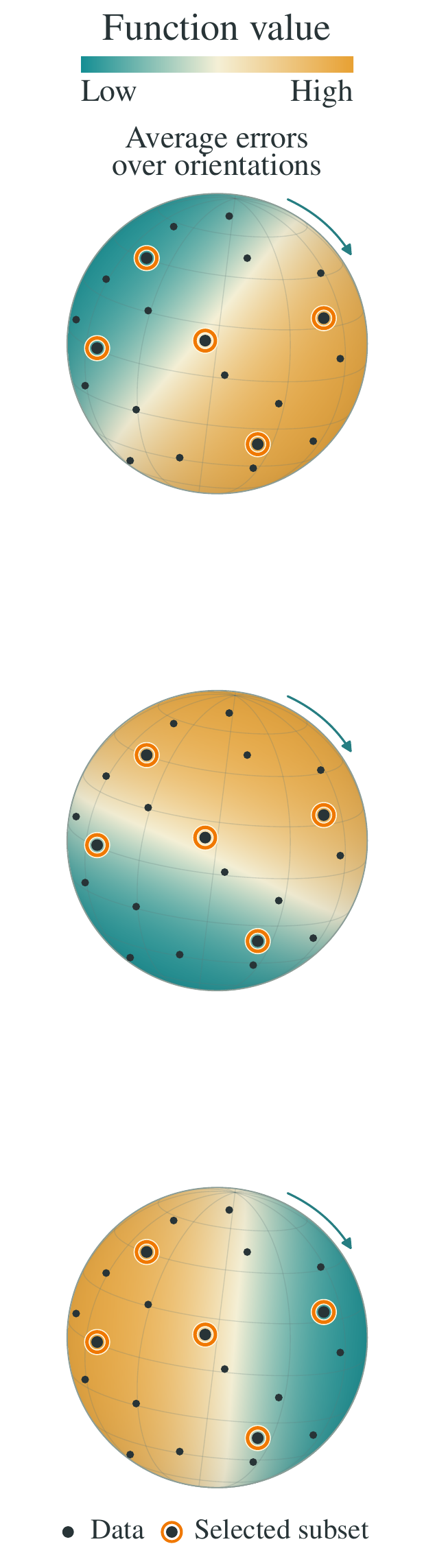}\hfill\usebox{\figonebox}
\caption{Left: orientation averaging. The data and the selected subset stay fixed while the function rotates, which is equivalent to rotating the data. The three orientations are illustrative; the estimation error is averaged over all of them. Right: spatial distributions of points selected from a vortex-shaped candidate distribution over Australia, shown in an equal-area projection. The top and middle rows pool 100 and 2,000 selected points from 10,000 and 500 runs, respectively, giving one million selected points per panel; the bottom row shows the first 2,000-point run. KS denotes the pooled two-dimensional Kolmogorov--Smirnov discrepancy against a vortex reference sample (lower is better; not a p-value).}
\label{fig:overview}
\end{figure*}

Our method rests on one reformulation: we treat a selection algorithm as a probability distribution over subsets of the training set (Section~\ref{sec:formulation}). Unbiased algorithms that select $k$ elements then form a convex polytope, and the orientation-averaged estimation error is a linear function over it, so unbiased minimum-variance selection is a linear program. The program has one coordinate per subset, far too many for standard solvers, so we instead develop an efficient vertex walk that decreases a variance surrogate while staying on the polytope, and hence unbiased.

{\color{magenta}We evaluate on CIFAR-10 and MNIST within the widely used DeepCore benchmarking framework, whose comparative study found uniform random selection to remain a strong baseline that competing methods did not consistently outperform} \citep{guo2022,moser2026}{\color{magenta}. In our experiments, our method matches or exceeds uniform selection in mean accuracy at every evaluated budget and outperforms the compared geometric methods at small budgets on CIFAR-10 and MNIST} (Section~\ref{sec:experiments}). Beyond dataset pruning, the same framework applies to mini-batch construction for stochastic gradient descent, where optimized batches show lower mean gradient variance than random batches at the same batch size (Section~\ref{sec:minibatches}).

\subsection{Contributions}

\begin{enumerate}
\item \textbf{Formulation.} We derive unbiased selection criteria from the principle of no preferred orientation and obtain closed-form pairwise variance expressions on the sphere and the torus (Section~\ref{sec:geometry}).
\item \textbf{Algorithm.} We give a label-free, training-free vertex walk that lowers a variance surrogate while preserving unbiasedness (Section~\ref{sec:formulation}).
\item \textbf{Experiments.} On CIFAR-10, MNIST, and CelebA, our method matches or exceeds uniform sampling at every budget and outperforms the compared geometric methods at small budgets on CIFAR-10 and MNIST (Section~\ref{sec:experiments}).
\item \textbf{Mini-batches.} The same framework reduces gradient variance in mini-batch construction at fixed batch size (Section~\ref{sec:minibatches}).
\end{enumerate}

\section{Related work}
\label{sec:related}

We refer to \citet{moser2026} for a recent taxonomy that organizes coreset selection into training-free, training-oriented, and label-free approaches. The DeepCore benchmark \citep{guo2022} compares many of these methods on CIFAR-10 and ImageNet and finds that uniform random selection remains a strong baseline that many sophisticated methods fail to beat consistently. Uniform sampling is unbiased; we keep this property by construction and search for lower variance within it.

Training-oriented methods score points with signals collected during training: gradient norms early in training \citep{paul2021}, gradient matching \citep{mirzasoleiman2020,killamsetty2021grad}, and bilevel formulations \citep{killamsetty2021glister}.  These methods can reach higher accuracy than training-free selection, but they require labels and at least one training pass over the full dataset, which is precisely the cost that training-free and label-free selection is designed to avoid.

Closest to our setting are selection rules that operate directly on an embedding of the data, without labels or training runs. Herding \citep{welling2009,chen2010} greedily selects points so that the subset mean tracks the full-data mean; $k$-center greedy \citep{sener2018} minimizes a covering radius over the feature space; and \citet{sorscher2022} prune by distance to self-supervised cluster prototypes. Related geometric criteria additionally exploit labels or training signals: moderate coresets keep points at intermediate distances from class centers \citep{xia2023}, and coverage-centric selection stratifies the budget over difficulty scores to preserve coverage \citep{zheng2023}. All of these methods encode some version of the heuristic that nearby points are redundant, and all of them bias the distribution of the selected subset, which can drive accuracy below uniform sampling, particularly at small budgets and under class imbalance \citep{guo2022,zheng2023}. In contrast, we derive the geometric intuition from first principles and search only among selections that are provably unbiased.

Unbiasedness is a classical requirement in the subset-sampling literature. Sensitivity-based coresets attach importance weights to the selected points so that the weighted subset loss is an unbiased estimator of the full loss \citep{feldman2011,bachem2017}\purple{, and determinantal point processes yield Monte Carlo estimators with faster-than-standard convergence rates} \citep{bardenet2020}. Kernel thinning \citep{dwivedi2024} recursively splits the data into balanced blocks using a kernel-dependent randomized procedure, then selects a block with small maximum mean discrepancy and refines it greedily. The last two steps can bias the selection. We make the opposite choice: optimize the partition, then return a uniformly chosen block. \purple{Unlike the weighted estimators above, our selections are unweighted, and unbiasedness enters as a linear constraint that turns the set of selection algorithms into a convex polytope.}

\green{Mini-batch construction is a second setting where unbiased subset averages matter, since stochastic gradient descent relies on uniformly sampled batches being unbiased. Importance sampling draws examples non-uniformly and reweights their gradients to stay unbiased} \citep{needell2014,zhao2015,katharopoulos2018}\green{, at the cost of per-example gradient information. Stratified sampling draws each batch proportionally from clusters of similar examples} \citep{zhao2014}\green{, and repulsive point processes make similar examples less likely to share a batch, }whether determinantal \citep{zhang2017} or Poisson disk \citep{zhang2019}, at the cost of a batch gradient that is, in general, unbiased only for a re-weighted objective. \green{Example ordering is a related lever: random reshuffling improves on sampling with replacement} \citep{mishchenko2020}\green{, and GraB uses herding on stale gradients to order examples so that partial sums of gradients track the full gradient} \citep{lu2022}\green{. Our batches are the blocks of an unbiased partition recomputed at each epoch from the embedding geometry alone.} Like stratified and repulsive sampling, this needs no gradient information, but the partition minimizes our variance surrogate rather than a fixed clustering or a repulsion heuristic, and {\color{magenta}a uniformly chosen batch has an unbiased gradient for the original objective at fixed model parameters}. Like reshuffling, every example is used once per epoch.

Finally, our closed forms are related to the variance analysis of sampling patterns in computer graphics, where the error of Monte Carlo integration under randomly transformed point sets admits spectral expressions on the torus and on the sphere \citep{pilleboue2015,oztireli2016,singh2019}. The Poisson disk batches of \citet{zhang2019} come from the same machinery: the variance of the stochastic gradient is expressed through the first- and second-order product densities of a point process over the data. These works analyze continuous point processes; we transport the machinery to the discrete setting of selecting subsets of a fixed dataset, average the estimation error over orientations of the unknown function, and couple the resulting closed forms with the polytope characterization of unbiased selections to obtain a practical algorithm that is unbiased by construction.

\section{Problem formulation}
\label{sec:formulation}
 
\subsection{Dataset pruning}
Let $\mathcal{T} = \{(x_i, y_i)\}_{i=1}^N$ be a training dataset, consisting of $N$ independent samples from the joint probability distribution $P(X,Y)$. We define $x_i \in \mathcal{X}$ as the inputs and $y_i \in \mathcal{Y}$ as the ground-truth labels. Dataset pruning aims to find a subset $\mathcal{S} \subset \mathcal{T}$ such that a model $M_{\theta_\mathcal{S}} \in \mathcal{M}$  trained on $\mathcal{S}$ and with parameters $\theta_{\mathcal{S}}$ achieves a good generalization error \citep{moser2026}:
$$
    \mathcal{S}^*
:=
    \argmin_{\substack{
        \mathcal{S} \subset \mathcal{T},
        \frac{\abs{\mathcal{S}}}{\abs{\mathcal{T}}} \leq 1 - \alpha
        }
    }
    \E_{(x,y) \sim P} \big[ \mathcal{L}(x,y; M_{\theta_\mathcal{S}}) \big],
$$ \purple{where $\mathcal{S}^*$ is the optimal subset,} $\alpha \in (0,1)$ is known as the pruning ratio\purple{,} and $\mathcal{L}$ is the loss function. In practical applications, a subset $\mathcal{S} \subset \mathcal{T}$ is considered good if $\abs{\mathcal{S}} \ll \abs{\mathcal{T}}$ and $\E_{(x,y) \sim P} \big[ \mathcal{L}(x,y; M_{\theta_\mathcal{S}}) \big] \approx \E_{(x,y) \sim P} \big[ \mathcal{L}(x,y; M_{\theta_\mathcal{T}}) \big]$, \purple{that is, if the model trained on the subset generalizes nearly as well as the model trained on the full dataset.}

\subsection{Unbiased selection}
\purple{We want selection algorithms that are unbiased, meaning that subset averages equal full-dataset averages in expectation. Under this constraint, the mean squared error of any quantity we estimate from the subset, whether a loss, a gradient, or an accuracy, reduces to its variance, which is therefore the quantity to minimize. To make this precise, we model} a selection algorithm $p$ as a discrete probability measure on the subsets of $\mathcal{T}$\purple{. The set of all such measures is
$
    \mathcal{P}
:=
    \bigl\{
        q \in [0,1]^{2^{\abs{\mathcal{T}}}} \,:\, \sum_{S \subseteq \mathcal{T}} q_S = 1
    \bigr\},
$
a probability simplex with $2^{\abs{\mathcal{T}}}$ coordinates. If $p \in \mathcal{P}$, then $p_S$ is the probability that $p$ assigns to subset $S$. Unbiasedness is then a condition on $p$:}

\begin{definition}
\label{def:unbiased}
\purple{A selection algorithm $p \in \mathcal{P}$ is \emph{unbiased} if} for every function $f : X \to V$ from an arbitrary set $X \supset \mathcal{T}$ to a vector space $V$ over $\R$,
$$
    \E_{S \sim p}\left[
        \frac{1}{\abs{S}}
        \sum_{x \in S}
        f(x)
    \right]
=
    \frac{1}{\abs{\mathcal{T}}}
    \sum_{t \in \mathcal{T}} f(t).
$$
\end{definition}

\purple{Two restricted classes of unbiased algorithms will be central to what follows.}

\begin{definition}
\label{def:k-selection}
\purple{An unbiased selection algorithm $p^{(k)} \in \mathcal{P}$ \emph{selects $k$ elements} if
$
    \forall S \subseteq \mathcal{T}, \; p^{(k)}_S > 0 \implies \abs{S} = k.
$
The set of all such algorithms is denoted by $\mathcal{U}_k$.}
\end{definition}

\begin{definition}
\label{def:permutation}
\purple{An algorithm $p^{(k)} \in \mathcal{U}_k$ is a \emph{uniform $k$-selection} algorithm, or simply a \emph{$k$-selection}, if
$
    \forall S \subseteq \mathcal{T},\;
        p^{(k)}_S
    \in
        \bigl\{0, k/\abs{\mathcal{T}}\bigr\}.
$}
\end{definition}

\purple{It follows that a $k$-selection exists only if $k$ divides $\abs{\mathcal{T}}$, and that every $k$-selection amounts to breaking $\mathcal{T}$ into $\abs{\mathcal{T}} / k$ blocks of size $k$ and selecting one of them uniformly at random.}

\section{From orientation averages to practical objectives}
\label{sec:geometry}

This section shows how averaging over all orientations yields a practical objective. Since nothing is known about how the downstream function is aligned with the data, no orientation is preferred, and the variance is averaged over all orientations of the function, or equivalently of the data (Figure~\ref{fig:overview}, left). On the sphere, a change of orientation is a rigid motion; on the flat torus, it is a rotation followed by a translation, with points wrapped back onto the torus. Either way the domain is mapped to itself, so the average never involves values of the function on unrelated regions outside it. {\color{magenta}Averaging over these transformations} turns differences in variance between selection algorithms into sums over pairs of selected points (Theorems~\ref{theorem:torus} and~\ref{theorem:sphere}, proved in the Appendix), and these pairwise expressions motivate surrogate objectives that depend only on the geometry of the selection.

Let $f \in L^2(\mathcal{X} \to \R)$ be a function whose full-dataset average is to be estimated by subset averages. For an unbiased selection algorithm $p \in \mathcal{P}$, the variance coincides with the mean squared error and is defined by
$$
    V_p
:=
    \E_{S \sim p}
    \Bigg[
    \bigg(
        \frac{1}{\abs{\mathcal{T}}} \sum_{t \in \mathcal{T}} f(t)
    -
        \frac{1}{\abs{S}} \sum_{x \in S} f(x)
    \bigg )^2
    \Bigg].
$$
After a change of orientation $g$ of the points, the variance becomes 
$$
    V_p(g)
:=
    \E_{S \sim p}
    \Bigg[
    \bigg(
        \frac{1}{\abs{\mathcal{T}}} \sum_{t \in \mathcal{T}} (f \circ g)(t)
    -
        \frac{1}{\abs{S}} \sum_{x \in S} (f \circ g)(x)
    \bigg)^2
    \Bigg].
$$
By Theorems~\ref{theorem:torus} and~\ref{theorem:sphere}, for two unbiased selection algorithms $p, q \in \mathcal{P}$, averaging the difference $V_p(g) - V_q(g)$ over the uniform distribution $\mathcal{G}$ of orientations on the torus or the sphere gives
\begin{align*}
    \E_{g \sim \mathcal{G}}
    \big[ V_p(g) - V_q(g) \big]
=
    \E_{S \sim p} \bigg[
        \frac{1}{\abs{S}^2}
        \sum_{x,y \in S}
        v_{\mathcal{X}}\big( h_{\mathcal{X}}(x, y) \big)
    \bigg]
-
    \E_{S \sim q} \bigg[
        \frac{1}{\abs{S}^2}
        \sum_{x,y \in S}
        v_{\mathcal{X}}\big( h_{\mathcal{X}}(x, y) \big)
    \bigg],
\end{align*}
in which
\begin{align}
    \label{eq:torus-kernel}
    v_{\T^d}(r)
&:=
    \sum_{j \in \Z^d}
    \abs{\hat{f}(j)}^2
    \frac{\Gamma(d/2)}
         {\left(\pi r \norm{j}_2 \right)^{d/2 -1}}
    J_{d/2-1} (2 \pi r \norm{j}_2)
,\;
    h_{\T^d}(x, y) := \norm{x - y}_2 \\
    \label{eq:sphere-kernel}
    v_{S^d}(t)
&:=
    \sum_{\ell \in \N}
    \norm{f_\ell}_{L^2(\sigma)}^2
    P_\ell^{d+1}(t)
,\;
    h_{S^d}(x, y) := x \cdot y.
\end{align}
Here $\hat{f}(j)$ is the $j$-th Fourier coefficient of $f$, $J_\alpha$ is the Bessel function of the first kind, $\sigma$ is the normalized surface measure on $S^d$, with $\sigma(S^d)=1$, $f_\ell$ is the degree-$\ell$ spherical harmonic component of $f$, and $P_\ell^{d+1}$ is the generalized Legendre polynomial of dimension $d+1$ and degree $\ell$.

In both cases, the orientation-averaged variance difference is a linear combination of oscillating functions of the pairwise geometry (Figure~\ref{fig:geometric-factors}). The spectral weights $\lvert\hat{f}(j)\rvert^2$ and $\lVert f_\ell\rVert_{L^2(\sigma)}^2$ depend on the unknown function $f$, but the oscillating factors they multiply depend only on the distance between pairs of selected points on the torus, or on their inner product on the sphere. The envelopes of these factors identify regions of low correlation: large separations on the torus, and inner products near zero, corresponding to approximately perpendicular points, on the sphere. Placing pairs in these regions keeps every term of the combination small regardless of $f$, which motivates a variance surrogate built from a geometry-only pairwise score $\tilde{v}_{\mathcal{X}}$:
\begin{align}
    \widetilde V_{\mathcal{X}}(p) :=
    \E_{S \sim p} \left[
        \frac{1}{\abs{S}^2}
        \sum_{x,y \in S}
            \tilde{v}_{\mathcal{X}}(x, y)
    \right].
\end{align}
The difference $\widetilde V_{\mathcal{X}}(p)-\widetilde V_{\mathcal{X}}(q)$ then stands in for the orientation-averaged variance difference $\E_{g\sim\mathcal{G}}[V_p(g)-V_q(g)]$. The experiments use the inverse-distance score $\tilde{v}_{\mathcal{X}}(x, y) = \frac{1}{\norm{x - y} + 0.01}$, which favors large separations, following the torus motivation, although exact upper bounds could also be used. {\color{magenta}On the sphere, scores that favor distant points also favor approximately perpendicular pairs when pairwise inner products are nonnegative. We observed this positive-inner-product regime for almost all sampled pairs of IMDb embeddings.}

\begin{figure*}[t]
\centering
\includegraphics[width=.8\textwidth]{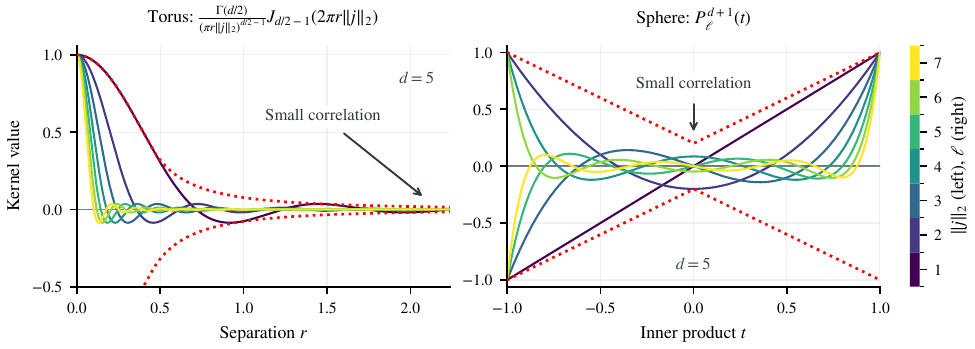}
\caption{Left: the normalized Bessel factor as a function of the pairwise separation $r$. Right: the normalized Gegenbauer polynomial $P_\ell^{d+1}(t)$ as a function of the inner product $t$, both for $d=5$. Colors identify the frequency norm $\lVert j\rVert_2$ (left) or the degree $\ell$ (right), from 1 to 7; the constant mode is omitted. Dotted red curves show the envelope of each family, and arrows mark the regions of low correlation that motivate the surrogate.}
\label{fig:geometric-factors}
\end{figure*}

\section{Our method}
\label{sec:method}

We partition $\mathcal{T}$ into $m=\abs{\mathcal{T}}/k$ blocks $S_1,\ldots,S_m$, each containing $k$ points.\footnote{We consider $k<\abs{\mathcal{T}}$. If $k$ does not divide $\abs{\mathcal{T}}$, we first discard $\abs{\mathcal{T}}\bmod k$ points uniformly at random and apply the procedure to the retained dataset. The resulting selection remains unbiased for the original dataset.} Selecting one of these blocks uniformly at random defines a $k$-selection $p$ (see Definition~\ref{def:permutation}), which corresponds to a vertex of $\mathcal{U}_k$. Swapping two points from different blocks preserves the partition structure and therefore defines another $k$-selection. Algorithm~\ref{alg:partition-optimization} exploits this property by repeatedly proposing swaps between blocks and accepting them only when they decrease the variance surrogate from Section~\ref{sec:geometry}:
\begin{equation}
    \widetilde V_{\mathcal{X}}(p)
    = \frac{1}{mk^2}
    \sum_{i=1}^{m}\sum_{x,y\in S_i}\tilde{v}_{\mathcal{X}}(x,y).
\end{equation}

Lemma~\ref{lem:unbiased-selection-polytope} shows that each
\purple{$k$-selection} of the training set corresponds to a vertex of the
polytope of unbiased selection algorithms with selection budget $k$. It also
shows that the expected sum of pairwise kernel evaluations is a linear
objective in the subset-selection probabilities. Thus, our swap-based method moves among extreme points of
the feasible polytope, each of which is optimal for some linear objective.

\begin{algorithm}[t]
\caption{Swap-based partition optimization}
\label{alg:partition-optimization}
\algrenewcommand{\algorithmicensure}{\textbf{Output:}}
\begin{algorithmic}[1]
\Require Dataset $\mathcal{T}$, subset size $k$ with $k\mid\abs{\mathcal{T}}$,
    variance surrogate $\tilde{v}_{\mathcal{X}}$, number of iterations $L$
\Ensure A subset of $k$ points
\State $m \gets \abs{\mathcal{T}}/k$
\State Precompute $V_{x,y} \gets \tilde{v}_{\mathcal{X}}(x,y)$ for all $x,y\in\mathcal{T}$
\State Partition $\mathcal{T}$ into blocks $S_1,\ldots,S_m$, each stored as a list of $k$ points
\For{$\ell=1,\ldots,L$}
    \State Sample $i,j$ uniformly without replacement from $\{1,\ldots,m\}$
    \State Sample $r,s$ independently and uniformly from $\{1,\ldots,k\}$
    \State $\Delta \gets -\sum_{q=1}^{k}\bigl(V_{S_i[r],S_i[q]}+V_{S_j[s],S_j[q]}\bigr)$
    \State Swap $S_i[r]$ and $S_j[s]$
    \State $\Delta \gets \Delta+\sum_{q=1}^{k}\bigl(V_{S_i[r],S_i[q]}+V_{S_j[s],S_j[q]}\bigr)$
    \If{$\Delta\geq 0$}
        \State Swap $S_i[r]$ and $S_j[s]$ back
    \EndIf
\EndFor
\State Sample $i$ uniformly from $\{1,\ldots,m\}$
\State \Return $S_i$
\end{algorithmic}
\end{algorithm}

As a faster alternative to iterative swaps, we encode each embedding as an $N_b$-bit vector whose scaled Hamming distances approximate the original $\ell_1$ distances. We maintain per-bit counts for each bucket, allowing us to compute a point's total Hamming distance to its members without storing pairwise distances. Using these distances as a surrogate, we greedily assign points to distant buckets while ensuring that each bucket ends with exactly $k$ points. Selecting one completed bucket uniformly at random then preserves unbiasedness and yields a partition-induced vertex of $\mathcal{U}_k$. Excluding encoding, assignment costs $O(mN_b\abs{\mathcal{T}})$, which is linear in dataset size for a fixed selection fraction and number of bits. This greedy alternative gives up some fidelity to our derived variance surrogate, and may perform fewer comparisons, in exchange for faster execution (see Section~\ref{sec:experiments}). 

\section{Experiments}
\label{sec:experiments}

\subsection{Dataset pruning}
We evaluate our method on CIFAR-10, MNIST, CelebA \citep{liu2015}, and IMDb \citep{maas2011} at selection fractions from 0.005 to 0.3 of the training set. We compare against three training-free, label-free baselines: uniform sampling, herding \citep{welling2009,chen2010}, and $k$-center greedy \citep{sener2018}. We use the DeepCore framework \citep{guo2022} without class-balanced selection quotas, since enforcing them requires labels. Herding uses a corrected implementation, since the original DeepCore implementation contains a bug. Tables report mean test accuracy in percent with 95\% Wald confidence-interval half-widths; boldface marks the highest mean in each column, not a statistically significant difference.

On CIFAR-10 and MNIST, {\color{magenta}the embedding-based methods use features from a ResNet18 trained for 10 epochs on the full labeled training set, following DeepCore's protocol for benchmark comparability. A freshly initialized ResNet18 is then trained on the selected subset for 200 epochs.} Training uses stochastic gradient descent (SGD) with a batch size of 128 and a learning rate of 0.1. {\color{magenta}Each setting uses 30 jobs. Each job comprises five selection-and-training runs; we average their best-epoch test accuracies within each job and report means and confidence intervals across jobs.}

CelebA and IMDb use pretrained embeddings. On CelebA, we use the latent codes of a variational autoencoder (VAE) \citep{kingma2014}, whose regularized, approximately Gaussian embedding fits the toroidal setting. On IMDb, a film-review dataset, we use {\color{magenta}Qwen3-Embedding-8B} embeddings \citep{zhang2025} to evaluate the spherical and label-free setting. {\color{magenta}Among one million uniformly sampled pairs of distinct IMDb training embeddings, all inner products were positive and over 99\% exceeded 0.2} (Figure~\ref{fig:imdb-inner-products}){\color{magenta}. Thus, increasing pairwise separation favors movement toward orthogonality in the observed regime.} Test accuracy is that of a multilayer perceptron (MLP) trained on the selected subset, averaged over 100 independent runs per setting on CelebA and 200 on IMDb. Additional experimental details are given in Appendix~\ref{app:experimental-details}.

\begin{table}[t]
\centering
\caption{Test accuracy (\%) on CIFAR-10 and MNIST.}
\label{tab:cifar-mnist}
\small
\setlength{\tabcolsep}{3pt}
\begin{tabular*}{\linewidth}{@{\extracolsep{\fill}}lrrrrr@{}}
\toprule
\multicolumn{6}{c}{\textbf{CIFAR-10}} \\
\midrule
Method / selected fraction & 0.005 & 0.01 & 0.05 & 0.1 & 0.3 \\
\midrule
Herding & 24.65 $\pm$ 0.30 & 30.63 $\pm$ 0.39 & 56.66 $\pm$ 0.48 & 74.76 $\pm$ 0.34 & 90.61 $\pm$ 0.07 \\
Ours & 31.30 $\pm$ 0.25 & {\color{magenta}\textbf{38.30}} $\pm$ 0.28 & \textbf{63.84} $\pm$ 0.32 & \textbf{76.91} $\pm$ 0.26 & 90.59 $\pm$ 0.06  \\
Ours (fast) & {\color{magenta}\textbf{31.30} $\pm$ 0.28} & {\color{magenta}38.20 $\pm$ 0.28} & {\color{magenta}63.63 $\pm$ 0.41} & {\color{magenta}76.48 $\pm$ 0.21} & {\color{magenta}90.67 $\pm$ 0.05} \\
Uniform & 30.63 $\pm$ 0.54 & 38.20 $\pm$ 0.51 & 63.22 $\pm$ 0.69 & 76.35 $\pm$ 0.36 & 90.49 $\pm$ 0.11 \\
$k$-center greedy & 28.95 $\pm$ 0.27 & 34.93 $\pm$ 0.23 & 60.19 $\pm$ 0.51 & 75.73 $\pm$ 0.35 & \textbf{91.03} $\pm$ 0.07 \\
\midrule
\multicolumn{6}{c}{\textbf{MNIST}} \\
\midrule
Herding & 64.25 $\pm$ 1.13 & 82.83 $\pm$ 0.67 & 98.22 $\pm$ 0.05 & 99.17 $\pm$ 0.02 & 99.56 $\pm$ 0.01 \\
Ours & \textbf{88.44} $\pm$ 0.40 & \textbf{92.83} $\pm$ 0.25 & 98.34 $\pm$ 0.05 & 99.13 $\pm$ 0.02 & 99.53 $\pm$ 0.01 \\
Ours (fast) & 87.87 $\pm$ 0.45 & 92.57 $\pm$ 0.21 & 98.26 $\pm$ 0.04 & 99.12 $\pm$ 0.01 & 99.53 $\pm$ 0.01 \\
Uniform & 86.39 $\pm$ 0.78 & 92.41 $\pm$ 0.35 & 98.20 $\pm$ 0.07 & 99.09 $\pm$ 0.03 & 99.53 $\pm$ 0.01 \\
$k$-center greedy & 68.93 $\pm$ 1.28 & 86.21 $\pm$ 0.51 & \textbf{98.96} $\pm$ 0.02 & \textbf{99.49} $\pm$ 0.01 & \textbf{99.65} $\pm$ 0.00 \\
\bottomrule
\end{tabular*}
\end{table}

\begin{table}[t]
\centering
\caption{CelebA mean test attribute accuracy (\%).}
\label{tab:celeba}
\small
\setlength{\tabcolsep}{3pt}
\begin{tabular*}{\linewidth}{@{\extracolsep{\fill}}lrrrrr@{}}
\toprule
Method / selected fraction & 0.005 & 0.01 & 0.05 & 0.1 & 0.3 \\
\midrule
Herding & \textbf{82.89} $\pm$ 0.04 & 85.04 $\pm$ 0.02 & 88.56 $\pm$ 0.01 & 89.52 $\pm$ 0.01 & 90.62 $\pm$ 0.00 \\
Ours & 82.65 $\pm$ 0.07 & \textbf{85.08} $\pm$ 0.03 & \textbf{88.64} $\pm$ 0.01 & \textbf{89.60} $\pm$ 0.01 & \textbf{90.66} $\pm$ 0.01 \\
Ours (fast) & 82.67 $\pm$ 0.06 & 85.02 $\pm$ 0.03 & 88.63 $\pm$ 0.01 & 89.58 $\pm$ 0.01 & 90.64 $\pm$ 0.01 \\
Uniform & 82.57 $\pm$ 0.07 & 84.95 $\pm$ 0.03 & 88.58 $\pm$ 0.02 & 89.57 $\pm$ 0.01 & 90.65 $\pm$ 0.01 \\
$k$-center greedy & 80.18 $\pm$ 0.03 & 81.80 $\pm$ 0.07 & 88.27 $\pm$ 0.01 & 89.46 $\pm$ 0.01 & 90.62 $\pm$ 0.00 \\
\bottomrule
\end{tabular*}
\end{table}

\newsavebox{\imdbtablebox}
\newsavebox{\imdbcaptionbox}
\begin{table}[t]
\centering
\sbox{\imdbtablebox}{%
\begin{minipage}{0.58\linewidth}
\centering
\caption{IMDb test accuracy (\%).}
\label{tab:imdb}
\small
\setlength{\tabcolsep}{3pt}
\begin{tabular}{lrrr}
\toprule
Method / fraction & 0.005 & 0.01 & 0.05 \\
\midrule
Herding & \textbf{92.73} $\pm$ 0.01 & 93.46 $\pm$ 0.01 & 94.47 $\pm$ 0.02 \\
Ours & {\color{magenta}92.44 $\pm$ 0.09} & 93.53 $\pm$ {\color{magenta}0.05} & 94.50 $\pm$ {\color{magenta}0.02} \\
Ours (fast) & 92.52 $\pm$ 0.09 & \textbf{93.54} $\pm$ 0.06 & \textbf{94.53} $\pm$ 0.02 \\
Uniform & {\color{magenta}92.03 $\pm$ 0.12} & {\color{magenta}93.33 $\pm$ 0.06} & {\color{magenta}94.45 $\pm$ 0.02} \\
kCenterGreedy & {\color{magenta}84.76 $\pm$ 0.61} & {\color{magenta}91.03 $\pm$ 0.08} & {\color{magenta}94.29 $\pm$ 0.02} \\
\bottomrule
\end{tabular}
\end{minipage}%
}%
\sbox{\imdbcaptionbox}{%
\begin{minipage}{0.40\linewidth}
\captionsetup{type=figure,skip=2pt,justification=raggedright}
\caption{{\color{magenta}IMDb inner products.}}
\label{fig:imdb-inner-products}
\end{minipage}%
}%
\usebox{\imdbtablebox}\hfill
\raisebox{-\dp\imdbtablebox}{\vbox{%
\hbox to 0.40\linewidth{\hfil
\includegraphics[width=0.40\linewidth,
  totalheight=\dimexpr\ht\imdbtablebox+\dp\imdbtablebox
    -\ht\imdbcaptionbox-\dp\imdbcaptionbox\relax,
  keepaspectratio]{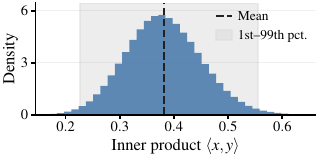}%
\hfil}%
\nointerlineskip
\hbox{\usebox{\imdbcaptionbox}}%
\kern0pt 
}}
\end{table}

On all four datasets we use the surrogate score $\tilde{v}_{\mathcal{X}}(x,y)=1/(\lVert x-y\rVert_2+0.01)$ and run Algorithm~\ref{alg:partition-optimization} for {\color{magenta}$65{,}536k$ proposed swaps, reduced to $2{,}048k$ only at fraction $0.3$ on CIFAR-10 and MNIST}. The fast variant, reported as Ours (fast), replaces Euclidean distances by Hamming distances between 64k-bit binary representations of the embeddings. Tables~\ref{tab:cifar-mnist}, \ref{tab:celeba}, and~\ref{tab:imdb} show that our method is particularly competitive at small budgets on CIFAR-10 and MNIST, whereas $k$-center greedy achieves the highest mean accuracy at larger budgets on these datasets. On CelebA and IMDb, our method has the highest mean accuracy at every budget except the smallest, where herding leads. Unlike the other methods, ours matches or exceeds uniform sampling in mean accuracy at every evaluated budget.

Figure~\ref{fig:benchmark-scaling} shows the recorded runtimes for 4096-dimensional embeddings. The largest run of our method, with $2^{22}$ embeddings, takes 153.1 seconds, whereas both baselines complete at most $2^{18}$ embeddings within the 300-second limit of the sweep.

\begin{figure}[t]
\centering
\includegraphics[width=\linewidth]{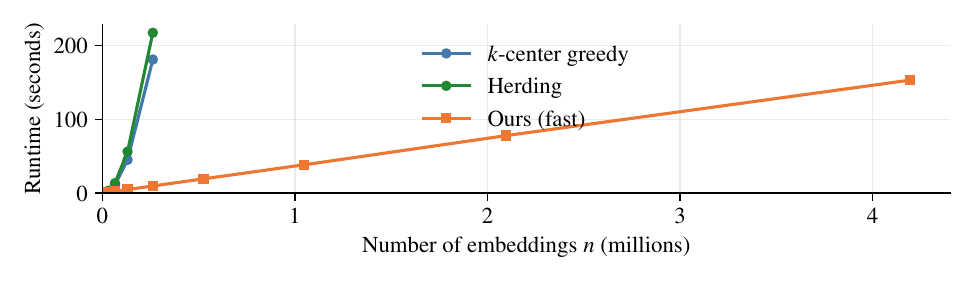}
\caption{Recorded runtime versus input size (linear axes).
We select $n/128$ points from standard-normal,
4096-dimensional float32 embeddings on an NVIDIA GeForce RTX 5090. We use binary representations with $64$k bits.}
\label{fig:benchmark-scaling}
\end{figure}

\subsection{Mini-batch variance reduction}
\label{sec:minibatches}

We evaluate mini-batch construction {\color{magenta}on CelebA's 40 attributes} using ResNet-18 trained with AdamW (initial learning rate $10^{-3}$, weight decay $10^{-4}$) and cosine annealing ($T_{\max}=30$, $\eta_{\min}=0$). At each epoch, we use the previous epoch's 512-dimensional pooled features, extracted in evaluation mode, as the selection features for batch optimization. {\color{magenta}Before each reported epoch, we compare random and optimized partitions of the same samples at fixed model parameters and batch size 128.} For each scheme, we estimate variance as the mean squared Euclidean distance of the mini-batch loss gradients from their mean. {\color{magenta}We report means $\pm$ 95\% Wald confidence-interval half-widths across 256 paired runs.} Table~\ref{tab:minibatch-variance} shows lower mean gradient variance for optimized batches than for random batches at every evaluated epoch.

\begin{table}[t]
\centering
\caption{Gradient variance of random and optimized mini-batches at fixed batch size.}
\label{tab:minibatch-variance}
\small
\setlength{\tabcolsep}{3pt}
\begin{tabular*}{\linewidth}{@{\extracolsep{\fill}}rrrr@{}}
\toprule
Epoch & Random variance & Optimized variance & Reduction (\%) \\
\midrule
2  & {\color{magenta}0.20397 $\pm$ 0.00901} & {\color{magenta}0.17367 $\pm$ 0.00751} & 14.85 \\
9  & {\color{magenta}0.23544} $\pm$ 0.00827 & {\color{magenta}0.20531} $\pm$ 0.00644 & 12.80 \\
16 & {\color{magenta}0.24042 $\pm$ 0.00839} & {\color{magenta}0.21829 $\pm$ 0.00651} & 9.20 \\
23 & {\color{magenta}0.10832 $\pm$ 0.00475} & {\color{magenta}0.10365 $\pm$ 0.00407} & 4.32 \\
30 & {\color{magenta}0.00800 $\pm$ 0.00018} & {\color{magenta}0.00794 $\pm$ 0.00017} & 0.78 \\
\bottomrule
\end{tabular*}
\end{table}

\section{Limitations and Future Work}
\label{sec:limitations}

The linear objective of Lemma~\ref{lem:unbiased-selection-polytope} has one variable per subset, so Algorithm~\ref{alg:partition-optimization} settles for a local search over uniform $k$-selections. It accepts only improving swaps, {\color{magenta}uses a fixed iteration budget with no optimality guarantee}, and precomputes all pairwise scores at quadratic cost. Faster or more accurate optimizers, of which the fast variant is a first step, are directions for future work.

{\color{magenta}The inverse-distance surrogate is motivated by the envelopes of the exact kernels in Theorems}~\ref{theorem:torus} {\color{magenta}and}~\ref{theorem:sphere}{\color{magenta}, but decreasing it does not guarantee a reduction in the true variance.} When a single spectral component dominates, the sign of its geometric factor matters: the Bessel factor alternates sign with separation, and even-degree sphere factors favor perpendicular rather than antipodal pairs, so pushing pairs apart can increase the variance. Estimating the dominant components of $f$ from a few labels, and using the exact kernels, could close this gap.

When the data lie on a submanifold of the domain, as images do in an embedding space, most motions carry them off it. The orientation average then evaluates the downstream function, such as the loss of an image classifier, at points that are not images, and the closed forms are no longer informative about the actual variance. To address this, we could use a submersion that maps the data onto a lower-dimensional sphere or torus that they fill, so that no motion carries them off it.

Unbiasedness is a design choice. For particular functions, a biased selection can have lower estimation error than any unbiased one, for instance a smooth $f$ estimated from the evenly spread points of $k$-center greedy. The price is a selected subset whose distribution differs from that of the data, as in the $k$-center greedy column of Figure~\ref{fig:overview} (right). This is {\color{magenta}undesirable} when the subset must represent the population, for instance under class imbalance.

\paragraph{Future work.}
Beyond the directions above, the Haar average extends to other homogeneous spaces, such as hyperbolic and projective embeddings and products of spheres, where harmonic analysis yields analogous pairwise kernels. Weighted selections enlarge the polytope and connect the framework to sensitivity-based coresets. On the application side, the annotation-budget setting that motivates label-free selection, such as remote sensing, remains to be tested.

\section{Conclusion}
\label{sec:conclusion}

Label-free, training-free dataset pruning has relied on the heuristic that nearby points are redundant, which biases the selected subset. This paper replaces the heuristic with a principle. If no orientation of the downstream function relative to the data is preferred, the objective is the sampling variance averaged over all orientations, and the search space is the polytope of unbiased selection algorithms. On the torus and the sphere, the averaged variance has closed pairwise forms in which the unknown function enters only through spectral weights, and the envelopes of the geometric factors give a surrogate that a vertex walk optimizes without labels or training. The resulting selections keep the unbiasedness of uniform sampling and match or exceed its mean accuracy at every budget on CIFAR-10, MNIST, CelebA, and IMDb with the largest gains at small budgets. The same partitions lower gradient variance in mini-batch construction at fixed batch size.

\subsection*{AI use statement}

In this work, we did not use generative AI tools for any task with required disclosure: generating synthetic datasets, developing theoretical models or conceptual frameworks, formulating mathematical claims, providing critical ingredients for proofs, writing proofs, proposing or refining hypotheses, designing or giving feedback on the research methodology or experiments, implementing methods, translation, cleaning or reformatting datasets, qualitative or thematic data analysis, and interpreting results.

We used generative AI tools for the following tasks with recommended disclosure. In writing, they corrected grammar and spelling and gave feedback on the quality of the text. They helped identify relevant literature. They helped create and format the figures and plots. Most of the code in the project was written by hand; generative AI assisted in optimizing CUDA kernels and in orchestrating (but not design) tests. 

We have reviewed all AI-assisted work. Every AI-suggested text edit was reviewed by the authors before being accepted; every AI-suggested reference was read and verified against its original source before citation; and the AI-assisted CUDA kernels and test orchestration were verified and tested for correctness by the authors. We take responsibility for the final content of this work, including text, claims, or artifacts produced with the aid of generative AI.

\subsection*{Reproducibility statement}

All theoretical results are stated with their assumptions and proved in full in the Appendix: Theorems~\ref{theorem:torus} and~\ref{theorem:sphere} give the closed-form variance differences on the torus and the sphere, and Lemma~\ref{lem:unbiased-selection-polytope} establishes the polytope and linear-objective structure the method relies on. The definitions of unbiased selection and of $k$-selections are in Section~\ref{sec:formulation}, and Algorithm~\ref{alg:partition-optimization} specifies the selection procedure completely, with the surrogate score used in all experiments given in Section~\ref{sec:geometry}. The experimental setup in Section~\ref{sec:experiments} lists the datasets, selection fractions, baselines, training framework and hyperparameters, numbers of runs and seeds, and the reporting conventions for accuracies and confidence intervals. {\color{magenta}The remaining dataset-pruning settings and the CelebA autoencoder details are given in Appendices}~\ref{app:pruning-details} {\color{magenta}and}~\ref{app:celeba-vae}, and the mini-batch experiments are described in Section~\ref{sec:minibatches}.

\ificlrfinal
\subsubsection*{Acknowledgments}
The first author is supported by a scholarship from the Brazilian National Council for Scientific and Technological Development (CNPq). The second author is supported by the Institute for Pure and Applied Mathematics (IMPA).
\fi

\bibliography{iclr2027_conference}
\bibliographystyle{iclr2027_conference}

\appendix
\section{Appendix}

\subsection{Theorems}

\begin{theorem}[Torus]
\label{theorem:torus}

Let $\mathcal{T} \subset \R^d / \Z^d = \T^d$ be a finite set of independent
samples from $\mu$, and let $f \in L^2(\T^d \to \R)$.
Identify torus points
with their canonical representatives in $[0,1)^d$, and let $w:\R^d\to[0,1)^d$
denote coordinatewise wrapping modulo one. For a translation $t\in\T^d$ and a
rotation $R\in SO(d)$, define
\[
    g_{t,R}(x) := w(Rx+t).
\]
Thus, rotation of the fixed representatives is followed by translation and
periodic wrapping. The distances below are Euclidean distances between these
representatives. Write $g=g_{t,R}$. For an unbiased selection algorithm
$p\in\mathcal{P}$ and a nonempty subset $S\subseteq\mathcal{T}$, define
\begin{align*}
    V(g, S)
:={}&
    \bigg[
        \int_{\T^d} (f \circ g)(x) \, \dd\mu(x)
    -
        \frac{1}{\abs{S}} \sum_{x \in S} (f \circ g)(x)
    \bigg]^2, \text{ and}
\end{align*}
\[
    V_p(g)
    := \E_{S\sim p}[V(g,S)]
    = \sum_{\emptyset\ne S\subseteq\mathcal{T}} p_S V(g,S).
\]
it follows that $V(I,S)$ is the realized sampled variance of
$S\subseteq\mathcal{T}$, and thus $V_p(I)$ is the mean of the realized sampled
variances of the unbiased selection algorithm $p$ (see
Definition~\ref{def:unbiased}).

If $p,q\in\mathcal{P}$ are unbiased selection algorithms and $\mathcal{G}$ is
the distribution of $g_{t,R}$ induced by independently sampling $t$ and $R$
from the normalized Haar measures on $\T^d$ and $SO(d)$, respectively, then

\begin{align*}
    \E_{g \sim \mathcal{G}}
    \left[
        V_p(g)
    -
        V_q(g)
    \right]
&=
    \E_{S \sim p} \left[
        \frac{1}{\abs{S}^2}
        \sum_{x,y \in S}
        v_{\T^d}\big(
            \norm{x - y}_2
        \big)
    \right] \\
&\qquad-
    \E_{S \sim q} \left[
        \frac{1}{\abs{S}^2}
        \sum_{x,y \in S}
        v_{\T^d}\big(
            \norm{x - y}_2
        \big)
    \right]
\end{align*}
in which
$$
    v_{\T^d}(r)
=
    \sum_{j \in \Z^d}
    \abs{\hat{f}(j)}^2
    \frac{\Gamma(d/2)}
         {\left(\pi \norm{j}_2 r\right)^{d/2 -1}}
    J_{d/2-1} (2 \pi \norm{j}_2 r)
$$
and $J_\alpha$ is the  Bessel function of the first kind.
The normalized Bessel factor is defined by its continuous extension,
with value one when $\norm{j}_2r=0$.
\end{theorem}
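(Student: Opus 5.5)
The plan is to expand the square in $V(g,S)$, use unbiasedness to cancel every term that does not depend jointly on pairs of selected points, and then evaluate the Haar average of the surviving pairwise products through the Fourier series of $f$.

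\textbf{Step 1: expansion and cancellation.} For fixed $g$, write $A(g):=\int_{\T^d}(f\circ g)\,\dd\mu$ and $B(g,S):=\frac{1}{\abs{S}}\sum_{x\in S}(f\circ g)(x)$. Then
\[
V(g,S)=A(g)^2-2A(g)B(g,S)+B(g,S)^2 .
\]
Apply Definition~\ref{def:unbiased} to the real-valued function $f\circ g$. This gives
\[
\E_{S\sim p}[B(g,S)]=\E_{S\sim q}[B(g,S)]=\frac{1}{\abs{\mathcal{T}}}\sum_{t\in\mathcal{T}}(f\circ g)(t).
\]
Consequently, for every fixed $g$ the terms $A(g)^2$ and $-2A(g)B(g,S)$ have identical expectations under $p$ and under $q$, and they cancel in $V_p(g)-V_q(g)$. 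What remains is
\[
V_p(g)-V_q(g)=\E_{S\sim p}\Bigl[\tfrac{1}{\abs{S}^2}\sum_{x,y\in S}f(gx)f(gy)\Bigr]-\E_{S\sim q}\Bigl[\cdots\Bigr].
\]
Both expectations are finite sums over subsets. It therefore suffices to compute $K(x,y):=\E_{g\sim\mathcal{G}}[f(gx)f(gy)]$ for fixed $x,y\in\mathcal{T}$, and to show that $K(x,y)=v_{\T^d}(\norm{x-y}_2)$.

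\textbf{Step 2: translation average.} Since $f$ is $\Z^d$-periodic, the wrapping $w$ is invisible, so $f(g_{t,R}x)=f(Rx+t)$ with $f$ viewed as a periodic function on $\R^d$. For fixed $R$, averaging over $t$ gives the autocorrelation of $f$ evaluated at $R(x-y)$:
\[
\int_{\T^d}f(Rx+t)f(Ry+t)\,\dd t=\sum_{j\in\Z^d}\abs{\hat f(j)}^2e^{2\pi i\,j\cdot R(x-y)} .
\]
This identity follows from Parseval applied to $f$ and to its translate. The series converges absolutely because $\sum_j\abs{\hat f(j)}^2=\norm{f}_{L^2}^2<\infty$. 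The integrand is in $L^1$ by Cauchy--Schwarz, so Fubini justifies averaging over $t$ first. Working with the autocorrelation rather than with pointwise values of $f$ also disposes of the fact that $f\in L^2$ is only defined almost everywhere: for fixed $x$ and $R$, the map $t\mapsto f(Rx+t)$ is a well-defined $L^2$ function.

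\textbf{Step 3: rotation average (the main obstacle).} Absolute convergence, uniform in $R$, lets us exchange the sum with the $SO(d)$ average. For each $j$ with $j\neq 0$, the vector $R^{\top}j$ is uniformly distributed on the sphere of radius $\norm{j}_2$. The required average $\E_R[e^{2\pi i\,(R^{\top}j)\cdot(x-y)}]$ is therefore the Fourier transform of the normalized surface measure on $S^{d-1}$. This is the classical formula
\[
\Gamma(d/2)\,(\pi\rho)^{1-d/2}J_{d/2-1}(2\pi\rho),\qquad \rho=\norm{j}_2\norm{x-y}_2 ,
\]
which I would cite or derive from the Funk--Hecke formula, or from the Poisson integral representation of $J_{d/2-1}$. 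The cases needing care are:
\begin{itemize}
\item $j=0$ or $x=y$, where the continuous extension gives the value one;
\item $d=1$, where $SO(1)$ is trivial. Here I would pair $j$ with $-j$: since $f$ is real, $\abs{\hat f(j)}^2=\abs{\hat f(-j)}^2$, which turns the exponentials into $\cos(2\pi\abs{j}r)$. This agrees with the formula because $\Gamma(1/2)(\pi\rho)^{1/2}J_{-1/2}(2\pi\rho)=\cos(2\pi\rho)$.
\end{itemize}
Summing over $j$ gives $K(x,y)=v_{\T^d}(\norm{x-y}_2)$, where the distance is the Euclidean distance between the fixed representatives, as in the statement.

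\textbf{Step 4: assembly and integrability.} Insert $K$ into the finite sums from Step 1. The remaining point is that $\E_g$ may be taken termwise in $V_p(g)-V_q(g)$. For this I would check that $\E_g[A(g)^2]$, $\E_g[A(g)B(g,S)]$ and $\E_g[B(g,S)^2]$ are each finite. Jensen, Fubini and Cauchy--Schwarz bound each of them by $\norm{f}_{L^2}^2$. The cancellation from Step 1 then survives the averaging over $g$, and the displayed identity of the theorem follows.
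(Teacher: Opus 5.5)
Your proposal is correct and follows essentially the same route as the paper's proof. The paper also expands the square, cancels the $p$-independent terms via unbiasedness, and reduces to the pair kernel $\E_g[f(ga)f(gb)]$. It then evaluates the translation average by periodicity and Parseval, and the rotation average as the Fourier transform of the uniform measure on $S^{d-1}$; the paper derives that formula explicitly via polar coordinates and Poisson's integral rather than citing it, and treats $d=1$ separately. Your integrability checks and your verification that the $d=1$ cosine agrees with the Bessel formula are minor refinements the paper leaves implicit.
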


\begin{proof}
Expanding the square yields
\begin{align}
    \E_{g \sim \mathcal{G}} \big[
        V(g, S)
    \big]
= &
    \E_{g \sim \mathcal{G}} \left[
        \left[
            \int_{\T^d} (f \circ g)(x) \, \dd\mu(x)
        -
            \frac{1}{\abs{S}}
            \sum_{a \in S} (f \circ g)(a)
        \right]^2
    \right] \\
= &
    \label{eq:first_vga_torus}
    \E_{g \sim \mathcal{G}} \left[
        c_S(g)
    \right]
    +
        \frac{1}{\abs{S}^2}
        \E_{g \sim \mathcal{G}} \left[
            \left[
                \sum_{a \in S} (f \circ g)(a)
            \right]^2
        \right].
\end{align}

Note that for almost every $g$,
\begin{align*}
    \E_{S \sim p}[ c_S(g) ]
=&
    \left(\int_{\T^d} (f \circ g)(x) \, \dd\mu(x)\right)^2 \\
&-
    2\int_{\T^d} (f \circ g)(x) \, \dd\mu(x)
    \sum_{\emptyset\ne S\subseteq\mathcal{T}} \frac{p_S}{\abs{S}}
    \sum_{a \in S} (f \circ g) (a)
\end{align*}

It follows from our definition of unbiased selection algorithm that

\begin{align*}
    \E_{S \sim p}[ c_S(g) ]
=&
    \left(\int_{\T^d} (f \circ g)(x) \, \dd\mu(x)\right)^2 \\
&-
    2\int_{\T^d} (f \circ g)(x) \, \dd\mu(x)
    \frac{1}{\abs{\mathcal{T}}} \sum_{z \in \mathcal{T}}(f \circ g) (z),
\end{align*}

which is independent on the algorithm $p$ and thus

$$
    \E_{S \sim p}[ c_S(g) ]
=
    \E_{S \sim q}[ c_S(g) ].
$$

One can conclude that both terms cancel out and that we can concentrate on the second element of Equation \ref{eq:first_vga_torus}.
Let us define $\mathcal{R}$ as the uniform distribution over rotation matrices in $SO(d)$.
Write $\widetilde f=f\circ w$ for the periodic extension of $f$.
Then, it follows that

\begin{align*}
    \frac{1}{\abs{S}^2}
    \E_{g \sim \mathcal{G}} \left[
        \left[
             \sum_{a \in S} (f \circ g)(a)
        \right]^2
    \right]
& =
    \frac{1}{\abs{S}^2}
    \sum_{a,b \in S}
    \E_{g \sim \mathcal{G}} \left[
        (f \circ g)(a)(f \circ g)(b)
    \right]\text{,}
\end{align*}
 in which
\begin{align*}
    \E_{g \sim \mathcal{G}} \left[
        (f \circ g)(a)(f \circ g)(b)
    \right]
&=
    \E_{R \sim \mathcal{R}} \left[
        \int_{[0,1)^d}
            \widetilde f(Ra+t)\widetilde f(Rb+t)
        \,\dd t
    \right].
\end{align*}

Using the substitution $y = Ra+t$, it follows that $\dd y = \dd t$.
The translated domain $Ra+[0,1)^d$ can be replaced by $[0,1)^d$ because
$\widetilde f(y)\widetilde f(R(b-a)+y)$ is $\Z^d$-periodic, and we can proceed by applying Parseval's theorem

\begin{align*}
    \E_{R \sim \mathcal{R}} \left[
        \int_{[0,1)^d}
            \widetilde f(Ra+t)\widetilde f(Rb+t)
        \,\dd t
    \right]
& =
    \E_{R \sim \mathcal{R}} \left[
        \int_{[0,1)^d}
            \widetilde f(y)\widetilde f(R(b-a)+y)
        \,\dd y
    \right] \\
& =
        \E_{R \sim \mathcal{R}} \left[
            \sum_{j \in \Z^d}
            \F[f](j) \overline {\F[\widetilde f(\cdot+R(b-a))](j)}
        \right]\\
& =
        \E_{R \sim \mathcal{R}} \left[
            \sum_{j \in \Z^d}
            \abs{\hat{f}(j)}^2 e^{-2 \pi i j \cdot R(b - a)}
        \right]\\
& =
        \sum_{j \in \Z^d}
            \abs{\hat{f}(j)}^2
        \E_{R \sim \mathcal{R}} \left[
             e^{-2 \pi i j \cdot R(b - a)}
        \right].
\end{align*}
Here $\hat f(j)=\int_{[0,1)^d}f(y)e^{-2\pi i j\cdot y}\,\dd y$;
the interchange of expectation and summation follows from
$\sum_j\abs{\hat f(j)}^2<\infty$.

For $d=1$, the formula simplifies to

\begin{align*}
    \sum_{j \in \Z}
        \abs{\hat{f}(j)}^2
        \E_{R\sim\mathcal R}\left[e^{-2\pi i j R(b-a)}\right]
= &
    \sum_{j \in \Z}
        \abs{\hat{f}(j)}^2
         \cos\left(2 \pi j \norm{b-a}_2\right).
\end{align*}
Indeed, $SO(1)=\{1\}$ and $\abs{\hat f(-j)}^2=\abs{\hat f(j)}^2$ since $f$ is real.

Otherwise, $\abs{S^{d-2}}$ is defined and it follows that
(with $\alpha=d/2-1$)

\begin{align*}
    \E_{R\sim\mathcal R}\left[e^{-2\pi i j\cdot R(b-a)}\right]
=&
    \frac{1}{\abs{S^{d-1}}}
    \int_{S^{d-1}}
         e^{-2 \pi i j \cdot x \norm{b-a}_2}
         \dd x \\
=&
    \frac{1}{\abs{S^{d-1}}}
    \int_{0}^\pi
         e^{-2 \pi i \norm{b-a}_2 \norm{j}_2 \cos \theta}
         \abs{S^{d-2}} \sin(\theta)^{d-2}
         \dd \theta \\
=&
    \frac{\frac{2 \pi^{(d-1)/2}}{\Gamma((d-1)/2)}}{\frac{2 \pi^{d/2}}{\Gamma(d/2)}}
    \int_{0}^\pi
         e^{-2 \pi i \norm{b-a}_2 \norm{j}_2 \cos \theta}
         \sin(\theta)^{d-2}
         \dd \theta \\
=&
    \frac{\Gamma(d/2)}{\Gamma((d-1)/2) \sqrt{\pi}}
    \int_{0}^\pi
         e^{-2 \pi i \norm{b-a}_2 \norm{j}_2 \cos \theta}
         \sin(\theta)^{d-2}
         \dd \theta \\
=&
    \frac{\Gamma(\alpha + 1)}{\Gamma(\alpha + 1/2) \sqrt{\pi}}
    \int_{0}^\pi
         e^{-2 \pi i \norm{b-a}_2 \norm{j}_2 \cos \theta}
         \sin(\theta)^{2 \alpha}
         \dd \theta \\
=&
    \left.
        \frac{\Gamma(d/2)}{\left(\frac{t}{2}\right)^{d/2 -1}} J_{d/2-1} (t)
    \right|_{t = 2 \pi \norm{j}_2 \norm{b-a}_2}.
\end{align*}


\end{proof}

\begin{theorem}[Sphere]
\label{theorem:sphere}
Let $\mathcal{T} \subset S^d$ be a finite set of independent
samples from $\mu$, $g : S^d \to S^d$ a rigid motion
and $f \in L^2(S^d \to \R)$. Let $\sigma$ be the normalized surface measure
on $S^d$, with $\sigma(S^d)=1$, and let $f_\ell$ be the degree-$\ell$ spherical
harmonic component of $f$. Its squared norm is
\[
    \norm{f_\ell}_{L^2(\sigma)}^2
    :=\int_{S^d}\abs{f_\ell(x)}^2\,\dd\sigma(x).
\]
For an unbiased selection algorithm
$p\in\mathcal{P}$ and a nonempty subset $S\subseteq\mathcal{T}$, define
\begin{align*}
    V(g, S)
:={}&
    \bigg[
        \int_{S^d} (f \circ g)(x) \, \dd\mu(x)
    -
        \frac{1}{\abs{S}} \sum_{x \in S} (f \circ g)(x)
    \bigg]^2, \text{ and}
\end{align*}
\[
    V_p(g)
    := \E_{S\sim p}[V(g,S)]
    = \sum_{\emptyset\ne S\subseteq\mathcal{T}} p_S V(g,S).
\]
it follows that $V(I,S)$ is the realized sampled variance of
$S\subseteq\mathcal{T}$, and thus $V_p(I)$ is the mean of the realized
sampled variances of the unbiased selection algorithm $p$ (see
Definition~\ref{def:unbiased}).

If $p,q\in\mathcal{P}$ are unbiased selection algorithms and
$\mathcal{G}$ is the uniform distribution over the isometries in $S^d$, then
\begin{align*}
    \E_{g \sim \mathcal{G}}
    \left[
        V_p(g)
    -
        V_q(g)
    \right] 
&=
    \E_{S \sim p} \left[
        \frac{1}{\abs{S}^2}
        \sum_{x,y \in S}
        v_{S^d}\big(
            x \cdot y
        \big)
    \right] \\
&\qquad-
    \E_{S \sim q} \left[
        \frac{1}{\abs{S}^2}
        \sum_{x,y \in S}
        v_{S^d}\big(
            x \cdot y
        \big)
    \right]
\end{align*}
in which
$$
    v_{S^d}(t)
:=
    \sum_{\ell \in \N}
    \norm{f_\ell}_{L^2(\sigma)}^2
    P_\ell^{d+1}(t),
$$
and $P_\ell^{d+1}$ is the generalized Legendre polynomial of dimension $d+1$ and
degree $\ell$, that can alternatively be defined for $d\ge2$ as:
$$
    P_\ell^{d+1}(t)
=
    \frac{C_\ell^{(\lambda)}(t)}{C_\ell^{(\lambda)}(1)}
$$
in which $C_\ell^{(\lambda)}$ are the Gegenbauer polynomials, and $\lambda = (d-1)/2$.
For $d=1$, we use the limiting definition
\[
    P_\ell^2(t)=\cos(\ell\arccos t),\qquad t\in[-1,1].
\]
\end{theorem}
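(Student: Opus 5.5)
The plan mirrors the proof of Theorem~\ref{theorem:torus}. For each nonempty $S$, expand $V(g,S)$ as
\[
c_S(g) + \frac{1}{\abs{S}^2}\Bigl(\sum_{a\in S}(f\circ g)(a)\Bigr)^2,
\]
where $c_S(g)$ collects the squared population term and the cross term. Averaging $c_S(g)$ over $S\sim p$ and using unbiasedness (Definition~\ref{def:unbiased} applied to the real-valued function $f\circ g$) replaces $\sum_S \frac{p_S}{\abs{S}}\sum_{a\in S}(f\circ g)(a)$ by $\frac{1}{\abs{\mathcal{T}}}\sum_{z\in\mathcal{T}}(f\circ g)(z)$. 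This quantity is independent of $p$, so $\E_{S\sim p}[c_S(g)] = \E_{S\sim q}[c_S(g)]$ for every $g$, and these terms cancel in $V_p(g)-V_q(g)$. Taking $\E_{g\sim\mathcal{G}}$ and swapping it with the finite sum over $S$ then reduces the claim to showing
\[
\E_{g\sim\mathcal{G}}\bigl[(f\circ g)(a)(f\circ g)(b)\bigr] = v_{S^d}(a\cdot b)
\quad\text{for all } a,b\in S^d .
\]

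To evaluate this correlation, let $\mathcal{G}$ be the normalized Haar measure on $O(d+1)$; the argument below also works for $SO(d+1)$ when $d\ge1$. Define $K(a,b) := \int f(ga)f(gb)\,\dd g$. Since $f\in L^2(\sigma)$, the function $K$ is defined pointwise, because for fixed $a$ the pushforward of Haar measure under $g\mapsto ga$ is $\sigma$. Decompose $f=\sum_\ell f_\ell$ in $L^2(\sigma)$. Write $Y_{\ell,1},\dots,Y_{\ell,N_\ell}$ for an orthonormal basis of degree-$\ell$ harmonics, so that $f_\ell=\sum_m c_{\ell m}Y_{\ell m}$. The integral then becomes a sum over $\ell,\ell',m,m'$ of $c_{\ell m}c_{\ell' m'}\int Y_{\ell m}(ga)Y_{\ell' m'}(gb)\,\dd g$.

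The key tool is Schur orthogonality, or equivalently the following averaging argument. The map $(a,b)\mapsto\int Y(ga)Z(gb)\,\dd g$ is invariant under the diagonal action, hence a function of $a\cdot b$ alone. Fix $a$ and integrate first over the stabilizer of $b$. In the variable $b$ the expression lies in the degree-$\ell'$ harmonic space and is zonal about $a$, so it is a multiple of $P_{\ell'}^{d+1}(a\cdot b)$. The same reasoning in $a$ forces $\ell=\ell'$. To fix the constant, set $a=b$: then $K(a,a)=\int f(ga)^2\,\dd g=\norm{f}^2_{L^2(\sigma)}$, and the same computation restricted to a single degree gives $\norm{f_\ell}^2$. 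Since $P_\ell^{d+1}(1)=1$, the coefficient of $P_\ell^{d+1}(a\cdot b)$ is exactly $\norm{f_\ell}_{L^2(\sigma)}^2$.

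A cleaner route is the addition theorem:
\[
\sum_m Y_{\ell m}(x)Y_{\ell m}(y) = N_\ell\,P_\ell^{d+1}(x\cdot y).
\]
Combined with $\int Y_{\ell m}(ga)Y_{\ell' m'}(gb)\,\dd g = \frac{\delta_{\ell\ell'}\delta_{mm'}}{N_\ell}P_\ell^{d+1}(a\cdot b)$, which is Schur orthogonality for the irreducible degree-$\ell$ representation, this yields $\sum_\ell\sum_m c_{\ell m}^2 P_\ell^{d+1}(a\cdot b)=v_{S^d}(a\cdot b)$. For $d=1$, harmonics of degree $\ell\ge1$ are $\cos(\ell\theta)$ and $\sin(\ell\theta)$. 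With $O(2)$ the computation is direct, gives $\cos(\ell\,\Delta\theta)$, and matches the stated limiting definition.

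The main obstacle is making this Schur-orthogonality step rigorous with the right normalization. The delicate points are the irreducibility of each harmonic space under $SO(d+1)$ (which can fail for $d=1$, so the case needs $O(2)$ or a direct check) and the justification for interchanging the infinite harmonic expansion with the Haar integral. For the latter I would use $L^2$ convergence: the map $f\mapsto f(g\cdot)$ is an isometry on $L^2$, so the bilinear form $K$ is continuous in $f$ in $L^2\times L^2$ for each fixed pair $(a,b)$, after integrating over $g$. Since $\abs{P_\ell^{d+1}}\le1$ and $\sum_\ell\norm{f_\ell}^2<\infty$, the series defining $v_{S^d}$ converges absolutely and uniformly.
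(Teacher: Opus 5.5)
Your proof is correct and reaches the key identity $\E_{g}[f(ga)f(gb)]=\sum_\ell\norm{f_\ell}_{L^2(\sigma)}^2P_\ell^{d+1}(a\cdot b)$ by a different route from the paper's. Both arguments average over the stabilizer of $a$. The paper then writes the correlation as $\langle f,h\rangle_{L^2(\sigma)}$, where $h$ is the average of $f$ over the slice $\{x: x\cdot y=a\cdot b\}$. It invokes Funk--Hecke to assert that slice averaging acts on the degree-$\ell$ harmonics $A_\ell$ as multiplication by $P_\ell^{d+1}(a\cdot b)$, and finishes by orthogonality. You instead expand $f$ first and use three facts: the correlation of two harmonics is diagonally invariant, it lies in $A_{\ell'}$ in one variable and in $A_\ell$ in the other, and it is zonal. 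Uniqueness of zonal harmonics then removes the cross-degree terms and makes each diagonal term a multiple of $P_\ell^{d+1}(a\cdot b)$. You read off that multiple at $a=b$ from $ga\sim\sigma$ and $P_\ell^{d+1}(1)=1$.

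Your route buys several things. It derives the weight rather than asserting the Funk--Hecke multiplier. It has no degenerate slices at $a\cdot b=\pm1$, and it needs no separate check that $h_\ell$ is homogeneous and harmonic. Through the bound $\abs{\int f_1(ga)f_2(gb)\,\dd g}\le\norm{f_1}_{L^2(\sigma)}\norm{f_2}_{L^2(\sigma)}$, it also gives a rigorous pointwise definition of the correlation for an $L^2$ class. The same bound justifies exchanging the harmonic series with the Haar integral, a step the paper does not spell out. The paper's route, in turn, exhibits the kernel as the eigenvalue of an explicit spherical averaging operator.

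One displayed formula is wrong. The correct statement is $\int Y_{\ell m}(ga)Y_{\ell'm'}(gb)\,\dd g=\delta_{\ell\ell'}\delta_{mm'}P_\ell^{d+1}(a\cdot b)$, without the factor $1/N_\ell$. At $a=b$ and $m=m'$ the left side is $\norm{Y_{\ell m}}_{L^2(\sigma)}^2=1$. The $1/N_\ell$ comes from Schur orthogonality of the matrix coefficients and is cancelled by the $N_\ell$ in the addition theorem. Substituting your version would give $\sum_\ell\norm{f_\ell}_{L^2(\sigma)}^2P_\ell^{d+1}(a\cdot b)/N_\ell$ rather than $v_{S^d}(a\cdot b)$. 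Your first route, which fixes the constant at $a=b$, is unaffected.

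Two smaller points. With $a$ fixed, the function of $b$ is invariant under the stabilizer of $a$, not of $b$. The zonal argument also needs $O(d+1)$, or $SO(d+1)$ with $d\ge2$, because the stabilizer of a point in $SO(2)$ is trivial.
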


\begin{proof}
Expanding the square yields
\begin{align}
    \E_{g \sim \mathcal{G}} \big[
        V(g, S)
    \big]
= &
    \E_{g \sim \mathcal{G}} \left[
        \left[
            \int_{S^d} (f \circ g)(x) \, \dd\mu(x)
        -
            \frac{1}{\abs{S}}
            \sum_{a \in S} (f \circ g)(a)
        \right]^2
    \right] \\
= &
    \label{eq:first_vga_sphere}
    \E_{g \sim \mathcal{G}} \left[
        c_S(g)
    \right]
    +
        \frac{1}{\abs{S}^2}
        \E_{g \sim \mathcal{G}} \left[
            \left[
                \sum_{a \in S} (f \circ g)(a)
            \right]^2
        \right].
\end{align}

As shown in Theorem \ref{theorem:torus}, unbiasedness gives
\[
    \E_{S\sim p}[c_S(g)]=\E_{S\sim q}[c_S(g)]
\]
for almost every $g$. Thus these terms
cancel out in $V_p(g)-V_q(g)$, and we can focus on the second element of Equation \ref{eq:first_vga_sphere}.

\begin{align}
    \frac{1}{\abs{S}^2} \E_{g \sim \mathcal{G}} \left[
        \left[
            \sum_{a \in S} (f \circ g)(a)
        \right]^2
    \right]
&=
    \frac{1}{\abs{S}^2}
    \sum_{a, b \in S}
    \E_{Q \sim \mathcal{G}} \left[
        f(Q a)f(Q b)
    \right].
\end{align}

Let $R$ be any rotation such that $R a = a$. Since all isometries on the sphere
are orthogonal transformations, we will denote $g$ with the letter $Q$:

\begin{align*}
\alpha
:=&
    \E_{Q \sim \mathcal{G}} \left[
        f(Q a)f(Q b)
    \right] \\
=&
    \E_{Q \sim \mathcal{G}} \left[
        f(Q a)f(Q R b)
    \right] \\
=&
    \E_{Q \sim \mathcal{G}} \left[
        \frac{1}{\abs{C_{Qa}}}
        \int_{x \in C_{Qa}}
        f(Q a)f(x)
        \dd x
    \right],
\end{align*}

in which $C_y = \{\abs{y} x: x \in S^d \land x \cdot y/\abs{y} = a \cdot b\}$.
At $a\cdot b=1$, each normalized slice integral below is interpreted
as evaluation at $y$; at $a\cdot b=-1$, it is interpreted as evaluation at $-y$.
Note that now the expected value only depends on $Qa$ rather than $Q$, and it follows that

\begin{align*}
    \alpha
=&
    \frac{1}{\abs{S^d}}
    \int_{y \in S^d}
    \frac{1}{\abs{C_y}}
    \int_{x \in C_y}
    f(y)f(x)
    \dd x
    \dd y \\
=&
    \frac{1}{\abs{S^d}}
    \int_{y \in S^d}
    f(y)
    \frac{1}{\abs{C_y}}
    \int_{x \in C_y}
    f(x)
    \dd x
    \dd y \\
=&
    \frac{1}{\abs{S^d}}
    \int_{y \in S^d}
    f(y)
    \overline{h(y)}
    \dd y \\
=&
    \langle f,h\rangle_{L^2(\sigma)}.
\end{align*}

Let $A_\ell$ denote the space of degree-$\ell$ spherical harmonics on
$S^d$, identified with their homogeneous harmonic extensions to $\R^{d+1}$.

By expressing $f$ as the direct sum
$$
    f = \widehat\bigoplus_{\ell \in \N} f_\ell, f_\ell \in A_\ell,
$$

it follows that 

\begin{align}
\label{eq:expand_g}
    h(y)
=&
    \frac{1}{\abs{C_y}}
    \int_{x \in C_y}
    f(x)
    \dd x \\
=&
    \frac{1}{\abs{C_y}}
    \int_{x \in C_y}
    \sum_{\ell \in \N} f_\ell(x)
    \dd x \\
=&
    \sum_{\ell \in \N}
    \frac{1}{\abs{C_y}}
    \int_{x \in C_y}
    f_\ell(x)
    \dd x.
\end{align}

We want to show that

\begin{align*}
    h_\ell(y)
:=&
    \frac{1}{\abs{C_y}}
    \int_{x \in C_y}
    f_\ell(x)
    \dd x
\end{align*}

is in $A_\ell$.

\begin{enumerate}
\item $h_\ell(\lambda y) = \lambda^\ell h_\ell(y)$

For $\lambda\ne0$,
\begin{align*}
    h_\ell(\lambda y)
= &
    \frac{1}{\abs{C_{\lambda y}}}
    \int_{x \in C_{\lambda y}}
    f_\ell(x)
    \dd x \\
= &
    \frac{1}{\abs{\lambda}^{d-1} \abs{C_y}}
    \int_{x \in C_{y}}
    f_\ell(\lambda x)
    \abs{\lambda}^{d-1} \dd x \\
= &
    \lambda^\ell h_\ell(y).
\end{align*}

\item $\Delta h_\ell = 0$

This is a consequence of the Funk--Hecke theorem \cite{groemer}, since $\forall y \in S^d$,
$h_\ell(y)$ $={}$ $\beta_\ell(a \cdot b)$ $f_\ell(y)$ for some function $\beta_\ell$$: [-1,1]
\to \R$ that will be characterized latter. The previous item can be used to
extend the equality to $\R^{d+1}$, thus proving that $h_\ell(y)$ $\in A_\ell$.

\end{enumerate}

$$
    \therefore h = \widehat\bigoplus_{\ell \in \N} h_\ell, h_\ell \in A_\ell
$$

Then, we can use the orthogonality of the spherical harmonics and the Funk--Hecke theorem to conclude that

\begin{align*}
    \alpha
=&
    \sum_{\ell \in \N} \langle f_\ell,h_\ell\rangle_{L^2(\sigma)} \\
=&
    \sum_{\ell \in \N} \norm{f_\ell}_{L^2(\sigma)}^2\,\beta_\ell(a \cdot b) \\
=&
    \sum_{\ell \in \N}
    \norm{f_\ell}_{L^2(\sigma)}^2
    P_\ell^{d+1}(a \cdot b),
\end{align*}
in which $P_\ell^{d+1}$ is the generalized Legendre polynomial of dimension $d+1$ and degree $\ell$.
\end{proof}

\FloatBarrier
{\color{magenta}A selection algorithm $p\in\mathcal{P}$ is unbiased in the sense of Definition}~\ref{def:unbiased} {\color{magenta}if and only if
\[
    p_{\emptyset}=0,
    \qquad
    \sum_{\substack{S\subseteq\mathcal{T}\\x\in S}}
        \frac{p_S}{\abs{S}}
    =\frac{1}{\abs{\mathcal{T}}}
    \quad\text{for every }x\in\mathcal{T}.
\]
Indeed, the empty subset must have zero probability for the subset average to be defined. Expanding the expectation over nonempty subsets gives
\[
    \E_{S\sim p}\left[\frac{1}{\abs{S}}\sum_{x\in S}f(x)\right]
    =\sum_{x\in\mathcal{T}}f(x)
      \sum_{\substack{S\subseteq\mathcal{T}\\x\in S}}
          \frac{p_S}{\abs{S}}.
\]
The stated conditions therefore imply unbiasedness; conversely, applying unbiasedness to the indicator function of each individual point gives these conditions. For subsets of fixed size $k\geq1$, they reduce to equal inclusion probabilities:
\[
    \Pr_{S\sim p^{(k)}}(x\in S)
    =\sum_{\substack{S\subseteq\mathcal{T}\\x\in S}}p^{(k)}_S
    =\frac{k}{\abs{\mathcal{T}}}
    \quad\text{for every }x\in\mathcal{T}.
\]
These linear constraints yield the polytope characterization below.}

\begin{lemma}
\label{lem:unbiased-selection-polytope}

Let $\binom{\mathcal{T}}{k} = \left\{S_1, \cdots,
S_{\binom{\abs{\mathcal{T}}}{k}}\right\}$ be an enumeration of the subsets of
$\mathcal{T}$ with size $k$. Note that any unbiased selection algorithm of size
$k$ can be uniquely identified as

$$
    p^{(k)} \in [0,1]^{\binom{\abs{\mathcal{T}}}{k}}\text{, in which}
$$
$$
    p_i^{(k)} = p^{(k)}_{S_i}.
$$

Similarly, any loss function $\mathcal{L} : \binom{\mathcal{T}}{k} \to \R$ can be uniquely identified as

$$
    \mathcal{L} \in \R^{\binom{\abs{\mathcal{T}}}{k}}\text{, in which}
$$
$$
    \mathcal{L}_i = \mathcal{L}(S_i).
$$

It follows from the definition of expectation that

$$
    \E_{S \sim p^{(k)}}[\mathcal{L}(S)]
=
    \left\langle p^{(k)},\mathcal{L}\right\rangle_{\R^{\binom{\abs{\mathcal{T}}}{k}}}
=
    p^{(k)}\cdot\mathcal{L}.
$$

Thus,

\begin{equation}
\label{eq:linear_programming}
    p^*
=
    \argmin_{p^{(k)} \in \mathcal{U}_k}
    \E_{S \sim p^{(k)}}[\mathcal{L}(S)],
\end{equation}

is a linear programming problem.
If $k$ divides $\abs{\mathcal{T}}$ and $p^{(k)}$ is a uniform $k$-selection, then $p^{(k)}$ is a vertex of the convex polytope $\mathcal{U}_k$. 
\end{lemma}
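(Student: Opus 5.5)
Two facts need proof: that the problem is a linear program, and that a uniform $k$-selection is a vertex of $\mathcal{U}_k$.

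\textbf{Linear program.} By the characterization just before the lemma, for fixed size $k\ge 1$ the set $\mathcal{U}_k$, viewed in $\R^{\binom{\abs{\mathcal{T}}}{k}}$, is cut out by finitely many linear constraints:
\[
p^{(k)}\ge 0,\qquad \sum_i p^{(k)}_i = 1,\qquad \sum_{i:\,x\in S_i} p^{(k)}_i = \frac{k}{\abs{\mathcal{T}}}\ \text{ for every } x\in\mathcal{T}.
\]
Coordinates on subsets of other sizes are forced to zero by Definition~\ref{def:k-selection}, which is why the identification is unique. The constraints bound every coordinate in $[0,1]$, so $\mathcal{U}_k$ is a bounded polyhedron, that is, a convex polytope. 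The objective $\E_{S\sim p^{(k)}}[\mathcal{L}(S)]=\sum_i p^{(k)}_i\mathcal{L}_i$ is linear in $p^{(k)}$. Hence \eqref{eq:linear_programming} is a linear program.

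\textbf{Vertex claim.} Let $p$ be a uniform $k$-selection and let $\mathcal{B}=\{B_1,\dots,B_m\}$ be its support, with $m=\abs{\mathcal{T}}/k$. The blocks are disjoint, since each point has inclusion probability $k/\abs{\mathcal{T}}=1/m$ and each block carries mass $1/m$, so every point lies in exactly one block.

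Use the standard characterization that $p$ is a vertex if and only if it is not a proper convex combination of two distinct points of $\mathcal{U}_k$. Suppose $p=\tfrac12(q+r)$ with $q,r\in\mathcal{U}_k$. Nonnegativity forces $q$ and $r$ to vanish outside $\mathcal{B}$, so $q$ and $r$ are supported on the disjoint blocks $B_1,\dots,B_m$. Pick any $x\in B_j$. The only support set containing $x$ is $B_j$, so the inclusion constraint gives $q_{B_j}=k/\abs{\mathcal{T}}=p_{B_j}$, and likewise $r_{B_j}=p_{B_j}$. Thus $q=r=p$, and $p$ is an extreme point.

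As an alternative, one can exhibit a supporting linear functional: take $c_S=0$ for $S\in\mathcal{B}$ and $c_S=1$ otherwise. Then $p$ is the unique minimizer of $c\cdot q$ over $\mathcal{U}_k$, because any minimizer has value zero and so is supported on $\mathcal{B}$, hence equals $p$ by the argument above. This alternative also matches the paper's remark that each vertex is optimal for some linear objective.

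The main obstacle is the step showing the support consists of disjoint blocks, together with the fact that disjointness pins down the weights through the inclusion equations. Everything else is bookkeeping.
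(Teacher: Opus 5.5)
Your proof is correct. The linear-program part is the same as the paper's, which packages the inclusion constraints as $Bp^{(k)}=\frac{k}{\abs{\mathcal{T}}}\mathbf{1}$ with an incidence matrix $B$ and records the sharper bound $p^{(k)}_i\le k/\abs{\mathcal{T}}$. Your vertex argument, however, takes a different route.

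\emph{The paper's route.} It never uses the block structure. It relaxes $\mathcal{U}_k$ to the larger polytope $\mathcal{V}=\{v:\,0\le v_i\le k/\abs{\mathcal{T}},\ \mathbf{1}^{\mathsf T}v=1\}$. An exchange argument, with $k\mid\abs{\mathcal{T}}$ guaranteeing a second fractional coordinate, shows that every maximizer of $\lVert v\rVert_2^2$ on $\mathcal{V}$ has entries in $\{0,k/\abs{\mathcal{T}}\}$. All such vectors therefore have the same norm and are maximizers, uniform $k$-selections included. Strict convexity then finishes: a maximizer lying in $\mathcal{U}_k\subseteq\mathcal{V}$ cannot be the midpoint of two distinct points of $\mathcal{U}_k$.

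\emph{Your route.} You work directly from the definition of an extreme point. You first deduce that the support partitions $\mathcal{T}$. Nonnegativity then fixes the support of any decomposition, and the per-point inclusion equations fix each block's weight.

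\emph{What each buys.} The paper's argument yields slightly more: uniform $k$-selections are exactly the maximum-norm points of $\mathcal{U}_k$, and they are even vertices of $\mathcal{V}$. But it needs an existence step, the exchange lemma, and a tersely stated convexity step, and its certificate is a nonlinear objective. Yours is shorter and fully elementary. Your functional ($c_S=0$ on the blocks, $c_S=1$ elsewhere) gives an explicit linear objective with $p$ as its unique minimizer. That directly substantiates the remark in Section~\ref{sec:method} that each vertex visited by the swap walk is optimal for some linear objective.
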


\begin{proof}
{\color{magenta}Enumerate $\mathcal{T}=\{t_1,\ldots,t_{\abs{\mathcal{T}}}\}$.} Let
$$
{\color{magenta}B}
    :=
{\color{magenta}\begin{bmatrix}
    c(S_1)     & c(S_2)      & \cdots    & c\left(S_{\binom{\abs{\mathcal{T}}}{k}}\right)
\end{bmatrix}
\in \R^{\abs{\mathcal{T}} \times \binom{\abs{\mathcal{T}}}{k}}},
$$

in which ${\color{magenta}c(S_j)}$ are the binary column vectors

$$
    {\color{magenta}c(S_j)_i} :=
{\color{magenta}\begin{cases}
    1 \text{ if } t_i \in S_j \\
    0 \text{ if } t_i \not\in S_j
\end{cases}}.
$$

Using this definition, we can rewrite Equation~\ref{eq:linear_programming} as the following optimization problem:

Given 
$
    \mathcal{L} \in \R^{
    \binom{
        \abs{
            \mathcal{T}
            }
    }
    {k}}
$,
minimize

$$
    {\color{magenta}p^{(k)}} \cdot \mathcal{L},
$$

in which ${\color{magenta}p^{(k)}} \in \R^{\binom{\abs{\mathcal{T}}}{k}}$ is subject to being a probability measure on $\binom{\mathcal{T}}{k}$:

\begin{align}
    \label{eq:lemma_nonneg}
    \forall 1\leq i  \leq \binom{\abs{\mathcal{T}}}{k},\,
        0 \leq {\color{magenta}p^{(k)}_i = p^{(k)}_{S_i}}\\
    \sum_{i=1}^{\binom{\abs{\mathcal{T}}}{k}} {\color{magenta}p_i^{(k)}} = 1
\end{align}

and being unbiased, i.e.
\begin{align}
    \label{eq:lemma_unbiased}
    \forall t \in \mathcal{T},\, {\color{magenta}\sum_{\substack{1\leq i\leq\binom{\abs{\mathcal{T}}}{k} \\ t \in S_i}} \frac{p_i^{(k)}}{k}} = \frac{1}{\abs{\mathcal{T}}}.
\end{align}

From Equation \ref{eq:lemma_unbiased} and non-negativity, it follows immediately that

$$
    \forall i, \, {\color{magenta}p^{(k)}_i}  \leq \frac{k}{\abs{\mathcal{T}}}.
$$

Thus, we can rewrite Equations \ref{eq:lemma_nonneg}-\ref{eq:lemma_unbiased} as follows:

\begin{align}
    0 \leq {\color{magenta}p^{(k)}_i}  \leq \frac{k}{\abs{\mathcal{T}}} \\
    {\color{magenta}\mathbf{1}^{\mathsf T} p^{(k)}} = 1 \\
    {\color{magenta}B p^{(k)}} = \frac{k}{\abs{\mathcal{T}}} \mathbf{1}
\end{align}

which indeed is a convex polytope ${\color{magenta}\mathcal{U}_k} \subset
\R^{\binom{\abs{\mathcal{T}}}{k}}$. Since in real-world applications the number
of dimensions would surpass the number of atoms in the universe, traditional
linear programming techniques are not applicable. However, we can find vertices
of ${\color{magenta}\mathcal{U}_k}$ by solving the surrogate problem below.

{\color{magenta}For the vertex claim, assume $k\mid\abs{\mathcal{T}}$.} For $v \in \R^{\binom{\abs{\mathcal{T}}}{k}}$, maximize

$$
    {\color{magenta}\lVert v\rVert_2^2}
$$

Subject to $v \in \mathcal{V}$, a convex polytope defined by the following two equations
\begin{align}
    \label{eq:restrictions_simplified_1}
    0 \leq v_i  \leq \frac{k}{\abs{\mathcal{T}}} \\
    \label{eq:restrictions_simplified_2}
    {\color{magenta}\mathbf{1}^{\mathsf T}} v = 1
\end{align}

The problem has at least one solution, since $\mathcal{V} \neq \emptyset$ is a compact set and
${\color{magenta}\lVert v\rVert_2^2}$ is continuous. We will show that if $v^*$ is a solution to the
problem, then $v_i^* \in \left\{0, \frac{k}{\abs{\mathcal{T}}} \right\}$.
Suppose by hypothesis that $u$ maximizes ${\color{magenta}\lVert u\rVert_2^2}$, but

$$
    \exists i_0, 0 < u_{i_0} < \frac{k}{\abs{\mathcal{T}}}.
$$

Since $k \mid \abs{\mathcal{T}}$ and ${\color{magenta}\mathbf{1}^{\mathsf T}}u = 1$, it follows that

$$
    \exists i_1 \neq i_0, 0 < u_{i_1} < \frac{k}{\abs{\mathcal{T}}}.
$$

This implies that $u$ is not a solution, since

$$
    u'_i
=
    \begin{cases}
        u_i &\text{ if } i\neq i_0 \land i \neq i_1 \\
        \min\left\{\frac{k}{\abs{\mathcal{T}}}, u_{i_0} + u_{i_1}\right\} &\text{ if } i = i_0 \\
        \max\left\{u_{i_0} + u_{i_1} - \frac{k}{\abs{\mathcal{T}}}, 0\right\} &\text{ if } i = i_1
    \end{cases}
$$

satisfies the Equations \ref{eq:restrictions_simplified_1} and \ref{eq:restrictions_simplified_2}, and ${\color{magenta}\lVert u'\rVert_2^2 > \lVert u\rVert_2^2}$. Thus, there are $\binom{\binom{\abs{\mathcal{T}}}{k}}{\frac{\abs{\mathcal{T}}}{k}}$ solutions $v^* \in \mathcal{V}$ that maximize the distance to the origin, which includes all {\color{magenta}uniform $k$-selections} (see Definition \ref{def:permutation}). Note that among these solutions, only the {\color{magenta}uniform $k$-selections} belong to ${\color{magenta}\mathcal{U}_k}$.

{\color{magenta}Since squared Euclidean norm is strictly convex, these maximizers are vertices of $\mathcal{U}_k$.}
\end{proof}

\section{Additional experimental details}
\label{app:experimental-details}

\subsection{Dataset-pruning settings}
\label{app:pruning-details}

{\color{magenta}\paragraph{CIFAR-10 and MNIST.}
We use DeepCore} \citep{guo2022} {\color{magenta}with ResNet18 and the standard training and test splits. All results in Table}~\ref{tab:cifar-mnist} {\color{magenta}use selection without class-balanced quotas. For the embedding-based methods, a randomly initialized ResNet18 is trained on the full labeled training set for 10 epochs with cross-entropy loss. Selection uses its penultimate-layer features, extracted in evaluation mode after the final epoch. Thus, the selection rules require neither labels nor additional training given the embeddings, but producing these embeddings involves supervised training. Evaluation uses a separate, freshly initialized ResNet18 trained on the selected subset for 200 epochs.}

{\color{magenta}\paragraph{Preprocessing and optimization.}
CIFAR-10 images are normalized channelwise with means $(0.4914,0.4822,0.4465)$ and standard deviations $(0.2470,0.2435,0.2616)$; MNIST uses mean $0.1307$ and standard deviation $0.3081$. Downstream CIFAR-10 training additionally uses random crops with four-pixel reflection padding and random horizontal flips. Embedding training, feature extraction, and test evaluation use no augmentation; MNIST uses no augmentation in either training stage. Both training stages use SGD with batch size 128, momentum $0.9$, Nesterov acceleration, weight decay $5\times10^{-4}$, and cross-entropy loss. Embedding training uses a constant learning rate of $0.1$; downstream training uses an initial learning rate of $0.1$ with cosine annealing to $10^{-4}$ over 200 epochs, updated after each mini-batch.}

\subsection{CelebA experimental details}
\label{app:celeba-vae}

{\color{magenta}\paragraph{VAE embeddings.}
We use an ADM-style residual and self-attention VAE pretrained on aligned CelebA images. RGB images are converted to floating-point values in $[0,1]$ and resized bilinearly to $128\times128$. The model has approximately 130 million parameters and a 512-dimensional Gaussian latent space. Its residual blocks use GroupNorm and SiLU, with four-head self-attention at spatial resolutions $32\times32$ and below; the decoder ends with a sigmoid. The VAE is trained for 140 epochs with batch size 32, learning rate $10^{-4}$, and zero weight decay. The objective combines a Gaussian reconstruction likelihood with standard deviation $0.1$, KL regularization with a free-bits threshold of $0.5$, and a VGG perceptual loss weighted by $0.5$. Selection and downstream classification use the deterministic 512-dimensional posterior means, extracted with the VAE in evaluation mode, rather than sampled latent vectors.}

{\color{magenta}\paragraph{Downstream classification.}
The selection pool contains 40.5k embedding vectors; selection fractions are relative to this pool, and the selected vectors are used for downstream training. Evaluation uses a separate 20.2k-vector test split. The classifier is an MLP with widths $512\to256\to128\to20$. Each hidden linear layer is followed by LayerNorm, ReLU, and dropout with probability $0.2$; the output layer produces 20 logits. We train for 20 epochs with Adam, a constant learning rate of $10^{-3}$, batch size 256, and binary cross-entropy with logits.}

{\color{magenta}The targets are Male, Young, Attractive, Smiling, Heavy Makeup, Wearing Lipstick, Blond Hair, Black Hair, Brown Hair, Gray Hair, Bald, Bangs, Eyeglasses, Goatee, Mustache, No Beard, Wearing Hat, Wearing Earrings, Pale Skin, and Chubby. Labels are mapped from $\{-1,1\}$ to $\{0,1\}$, and predictions use a sigmoid threshold of $0.5$. Mean test attribute accuracy averages the per-attribute test accuracies over these 20 targets. Each run contributes its highest mean test attribute accuracy over the 20 training epochs, and the table averages these values over 100 runs per setting.}

\subsection[IMDb experimental details]{{\color{magenta}IMDb experimental details}}
\label{app:imdb-details}

{\color{magenta}\paragraph{Data and embeddings.}
We use the standard labeled IMDb splits: 25,000 training reviews and 25,000 test reviews, each containing equal numbers of positive and negative examples. The unlabeled split is not used. Reviews are embedded with the pretrained Qwen3-Embedding-8B model} \citep{zhang2025}{\color{magenta}, using the original review text without an added task-specific instruction and a truncation limit of 8,192 tokens. The resulting 4,096-dimensional, unit-$\ell_2$-norm embeddings are stored in float32 and used unchanged for selection and downstream classification, with no additional normalization, dimensionality reduction, or augmentation. Selection operates only on the training split, without class-balanced quotas or additional embedding training.}

{\color{magenta}\paragraph{Classifier and optimization.}
For each selected subset, we train a freshly initialized MLP with widths $4096\to128\to128\to2$, ReLU activations after both hidden layers, and no dropout or normalization layers. The output consists of two sentiment logits, trained with cross-entropy loss. Training runs for 200 epochs using SGD with batch size 128, momentum $0.9$, Nesterov acceleration, and weight decay $5\times10^{-4}$. The learning rate starts at $0.1$ and follows cosine annealing to $10^{-4}$ over 200 epochs, updated after each mini-batch. Training examples are shuffled each epoch.}

{\color{magenta}\paragraph{Epoch selection and reporting.}
We evaluate accuracy on the full test split after every epoch. Each run contributes its highest test accuracy over the 200 training epochs; no separate validation split is used for epoch selection. The main IMDb comparison uses 200 selection-and-training runs per method--fraction setting. Table}~\ref{tab:imdb} {\color{magenta}reports their mean accuracy and 95\% Wald confidence-interval half-width, computed as $1.96s/\sqrt{200}$, where $s$ is the sample standard deviation of the per-run best-epoch accuracies.}

\end{document}

%% file: math_commands.tex
\usepackage{amsmath,amsfonts,bm}

\def\eqref#1{equation~\ref{#1}}

\def\1{\bm{1}}

\DeclareMathAlphabet{\mathsfit}{\encodingdefault}{\sfdefault}{m}{sl}
\SetMathAlphabet{\mathsfit}{bold}{\encodingdefault}{\sfdefault}{bx}{n}

\newcommand{\E}{\mathbb{E}}

\newcommand{\R}{\mathbb{R}}

\DeclareMathOperator*{\argmin}{arg\,min}

%% file: header.tex
\usepackage{xparse}

\definecolor{FranciscoComment}{RGB}{0,82,155}
\definecolor{RodrigoComment}{RGB}{180,0,120}

\newif\ifshowcomments
\showcommentsfalse

\NewDocumentCommand{\comment}{m m +m +m}{%
    \ifshowcomments
        {\color{#2}#3%
        \if\relax\detokenize{#4}\relax
        \else
            \space\textbf{[#1: #4]}%
        \fi}%
    \else
        #3%
    \fi
}

\NewDocumentCommand{\namedcomment}{m m +m o}{%
    \IfNoValueTF{#4}
        {\comment{#1}{#2}{#3}{}}
        {\comment{#1}{#2}{#3}{#4}}%
}

\NewDocumentCommand{\francisco}{+m o}{%
    \IfNoValueTF{#2}
        {\namedcomment{Francisco}{FranciscoComment}{#1}}
        {\namedcomment{Francisco}{FranciscoComment}{#1}[#2]}%
}

\NewDocumentCommand{\rodrigo}{+m o}{%
    \IfNoValueTF{#2}
        {\namedcomment{Rodrigo}{RodrigoComment}{#1}}
        {\namedcomment{Rodrigo}{RodrigoComment}{#1}[#2]}%
}

\usepackage{soul}
\definecolor{EditHighlight}{RGB}{255,244,130}
\newif\ifmarkedits
\markeditsfalse

\AtBeginDocument{\ifmarkedits\else\colorlet{magenta}{black}\fi}

\makeatletter
\newcommand{\EditHighlightLeaders}{%
    \leaders\hbox{{%
        \color{EditHighlight}%
        \vrule width .1pt height \SOUL@ulht depth \SOUL@uldp
    }}%
}
\NewDocumentCommand{\edited}{+m}{%
    \ifmarkedits
        {\let\SOUL@hlcolor\@firstofone
         \let\SOUL@ulleaders\EditHighlightLeaders
         \hl{#1}}%
    \else
        #1%
    \fi
}
\makeatother


%% file: cmd.tex
\newcommand{\enun}[1]%
{%
    \ifthenelse{\isundefined{\showenun}}%
        {}%
        {#1}%
}

\newcommand{\green}[1]{\ifmarkedits\textcolor[rgb]{0,0.5,0}{#1}\else#1\fi}
\newcommand{\purple}[1]{\ifmarkedits\textcolor[rgb]{0.5,0,0.5}{#1}\else#1\fi}

\providecommand{\argmin}{\operatorname*{arg\,min}}

\newcommand{\dd}{\mathop{}\mathrm{d}}
\newcommand{\norm}[1]{\left\lVert#1\right\rVert}
\newcommand{\abs}[1]{\left\lvert#1\right\rvert}

\newcommand{\F}{\mathcal{F}}

\providecommand{\R}{\mathbb{R}}
\newcommand{\N}{\mathbb{N}}
\newcommand{\Z}{\mathbb{Z}}
\newcommand{\T}{\mathbb{T}}

\providecommand{\E}{\mathbb{E}}

\theoremstyle{definition}
\newtheorem{definition}{Definition}

\theoremstyle{plain}
\newtheorem{theorem}{Theorem}[section]
\newtheorem{lemma}[theorem]{Lemma}